\newcommand{\QP}[1]{\textcolor{blue}{{\bf QP:} #1}}
\definecolor{silver}{rgb}{0.75, 0.75, 0.75}
\definecolor{amethyst}{rgb}{0.54, 0.17, 0.89}
\definecolor{palecornflowerblue}{rgb}{0.67, 0.8, 0.94}
\definecolor{bubbles}{rgb}{0.91, 1.0, 1.0}
\definecolor{champagne}{rgb}{0.97, 0.91, 0.81}
\newcolumntype{a}{>{\columncolor{champagne}}c}
\newtheorem{assumption}{Assumption}
\newtheorem{lemma}{Lemma}
\DeclareMathOperator*{\argmax}{arg\,max}            
\newtheorem{theorem}{Theorem}
\title{RAIN: RegulArization on Input and Network for Black-Box Domain Adaptation }
\author{
Qucheng Peng$^1$
\and
Zhengming Ding$^2$\and
Lingjuan Lyu$^3$\and
Lichao Sun$^4$\And
Chen Chen$^1$
\affiliations
$^1$Center for Research in Computer Vision, University of Central Florida\\
$^2$Department of Computer Science, Tulane University\\$^3$Sony AI\\$^4$Lehigh University
\emails
qucheng.peng@knights.ucf.edu, chen.chen@crcv.ucf.edu
}
\begin{document}

\maketitle

\begin{abstract}

Source-Free domain adaptation transits the source-trained model towards target domain without exposing the source data, trying to dispel these concerns about data privacy and security. However, this paradigm is still at risk of data leakage due to adversarial attacks on the source model. Hence, the Black-Box setting only allows to use the outputs of source model, but still suffers from overfitting on the source domain more severely due to source model's unseen weights. In this paper, we propose a novel approach named RAIN (\textbf{R}egul\textbf{A}rization on \textbf{I}nput and \textbf{N}etwork) for Black-Box domain adaptation from both input-level and network-level regularization. For the input-level, we design a new data augmentation technique as Phase MixUp, which highlights task-relevant objects in the interpolations, thus enhancing input-level regularization and class consistency for target models. For network-level, we develop a Subnetwork Distillation mechanism to transfer knowledge from the target subnetwork to the full target network via knowledge distillation, which thus alleviates overfitting on the source domain by learning diverse target representations. Extensive experiments show that our method achieves state-of-the-art performance on several cross-domain benchmarks under both single- and multi-source black-box domain adaptation. 

\end{abstract}

\section{Introduction}


\begin{figure}[htp]
  
  \centering  \includegraphics[width=0.95\linewidth]{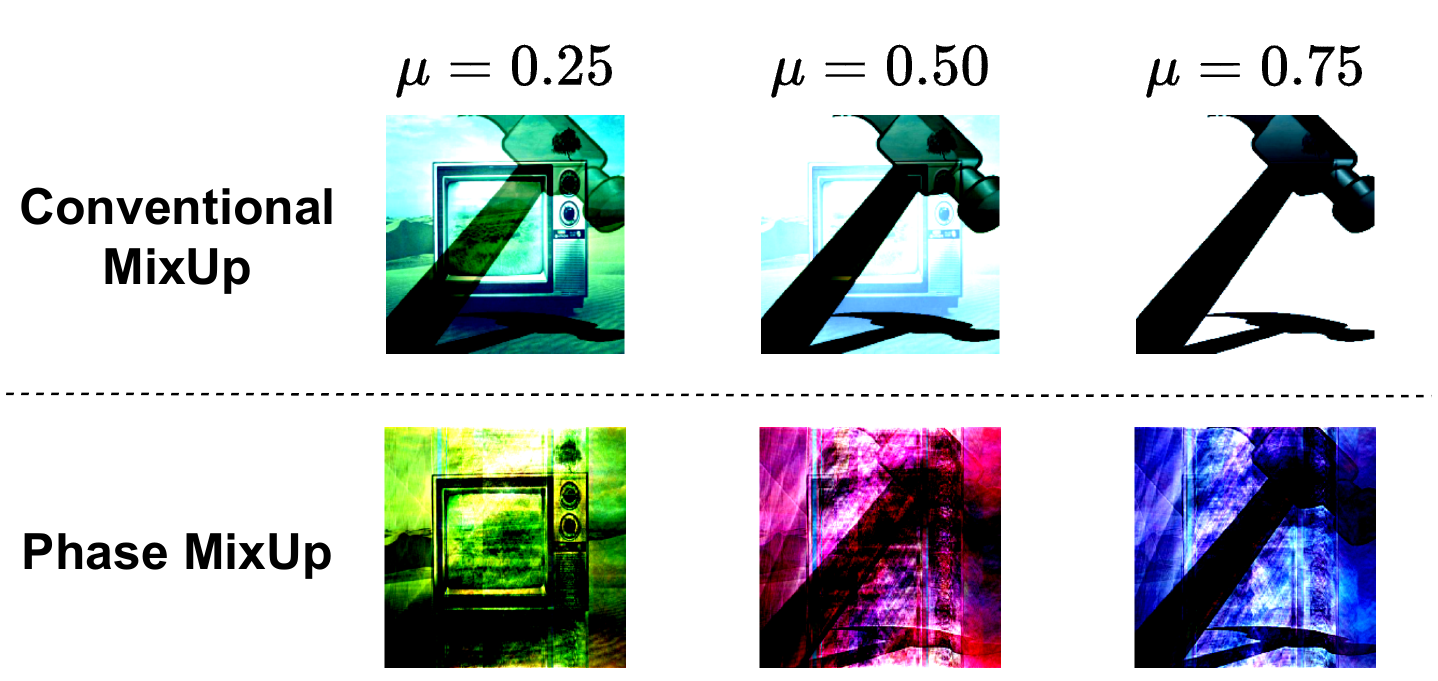}
  \caption{Comparisons between the conventional MixUp \protect\cite{zhang2018mixup} and our Phase MixUp. (Best viewed in color.) $\mu$ is the ratio of the ``hammer'' image fused into the ``television'' image. Note that when $\mu = 0.50$, in Conventional MixUp the dark shadow outweighs ``television'', while in Phase MixUp both the core objects can be highlighted. Besides, when $\mu = 0.75$, the ``television" is completely unseen in Conventional MixUp, while it still exists in Phase MixUp. These demonstrate that Phase MixUp can alleviate the interference of background and style information.}
  \label{fig:compare}
\end{figure}

Domain adaptation \cite{wang2018deep} is proposed to transfer knowledge from the labeled training data that form the \emph{source} domain to the unlabeled test data that form the \emph{target} domain. Owing to the concerns about data privacy and security, a new setting --Source-Free domain adaptation \cite{liang2020we}-- emerges, where source data are completely unavailable when adapting to the target. Even so, there are still potential risks if the source models are visible. Some works like dataset distillation \cite{wang2018dataset} and DeepInversion \cite{yin2020dreaming} may recover data from the model through adversarial attacks. In such a case, \textbf{Black-Box domain adaptation} \cite{zhang2021unsupervised} is proposed to consider the source models are completely unseen and only model outputs are accessible for target adaptation, which is a more strict version of Source-Free domain adaptation.

Due to the limited access to the source model, the only way we obtain source information is by doing knowledge distilling between the source model and the target model. In the original Source-Free setting, we can alleviate domain shifts by updating the source model gradually, keeping the transferrable parameters while replacing the untransferrable ones. However, in the Black-box scenario, only the source model's outputs are exposed, i.e., the source information cannot be disentangled as target-relevant and target-irrelevant parts as Source-Free does. The overfitting on the source domain in Black-Box is much stronger than that in Source-free, which greatly undermines the target models' performance. \cite{liang2021distill} observes this problem and uses conventional MixUp \cite{zhang2018mixup} as regularization for better generalization. However, there exist problems with regularization methods like MixUp or CutMix \cite{yun2019cutmix} because \textit{they are conducted on both input and label levels}. In the target adaptation process, we do not have accurate labels but the pseudo labels as substitutes and the linear behaviors learned from these noisy pseudo-labels due to domain shifts could have negative impacts on the generalization and degrade the model's performance (details in Table \ref{tab:ab-aug}).

Based on the discussions above, we develop a new and effective data augmentation method for domain adaptation named Phase MixUp, which regularizes ML model training only from the \emph{input aspect}. Apart from inhibiting potential noises from pseudo-labels, our Phase MixUp can highlight task-relevant objects and avoid negative impacts from background information as well. Conventional MixUp directly blends one image's pixels into another image \cite{wu2020dual,liang2021distill}. But this kind of combination is too simple to highlight the objects that we want to classify, especially when the image owns complex background and style information. For example, the top of Fig. \ref{fig:compare} shows the conventional MixUp between the ``television" image and the ``hammer" image. The ``hammer" image has a very strong dark shadow that outweighs another object ``television" when $\mu = 0.50$. Besides, when $\mu = 0.75$, the ``television" is completely unseen. 
In such cases, background elements like dark shadow will have a negative impact on conventional MixUp as they distract the attention of core objects and the model tends to connect the ``hammer" with the shadow instead. Frequency decomposition proves to be a useful tool to disentangle object information and background information from an image \cite{yang2020fda,liu2021feddg}, \textit{since amplitude spectra contain most background information, while phase spectra are related to object information.} Therefore, we propose Phase MixUp (Fig. \ref{fig:framework}b) to capture the key objects and reduce background interference at the same time, as the bottom of Fig. \ref{fig:compare} shows. By mixing their phase spectra, we can focus more on the two core objects ``television" and ``hammer" and weaken background information like the shadow. Even under a more extreme case like $\mu = 0.75$, the ``television" still exists in the mixed image of Phase MixUp. The augmented image will attend further training to enhance the target model's class consistency.

What's more, despite the fact that input- and label-level regularizations have received enough attention in domain adaptation, regularization from the \emph{network aspect} is overlooked. Specifically, in the Black-Box setting, only the source model's outputs are accessible, which leads to more severe overfitting of target networks on source information. Because target networks have to learn from the outputs produced by source models without detailed calibrations on network weights as the Source-Free setting does. Therefore, a network-level regularization technique on target networks is necessary. To this end, we propose a novel method for domain adaptation called Subnetwork Distillation, which aims to regularize the full target network with the help of its subnetwork and calibrate the full target networks gradually. 
We slim the widths of the target network to get its subnetwork, which has a smaller scale than the full network, hence less likely overfitting to the source domain. By transferring knowledge from the target subnetwork to the full target network, the original full network captures diverse representations from the target domain with a better generalization ability.  

{Our contributions are summarized in three aspects:
\setlist{nolistsep}
\begin{itemize}[noitemsep,leftmargin=*] 
    \item We propose Phase MixUp as a new \emph{input-level} regularization scheme that helps the model enhance class consistency with more task-relevant object information, thus obtaining more robust representations for the target domain.
    \item We introduce a novel \emph{network-level} regularization technique called Subnetwork Distillation that assists the target model to learn diverse representations and transfers knowledge from the model's partial structures to avoid overfitting on Black-Box source models.
    \item We conduct extensive experimental results on several benchmark datasets with both Single-Source and Multi-Source settings, showing that our approach achieves state-of-the-art performance compared with the latest methods for Black-Box domain adaptation. 
\end{itemize}}

\section{Related Work}

\paragraph{Domain Adaptation.} 
Metric-based and GAN-based approaches are the two major routes in single-source domain adaptation. The metric-based methods measure the discrepancy between the source and target domain explicitly. \cite{long2015learning} uses maximum mean discrepancy, while \cite{tzeng2014deep} applies deep domain confusion. Recent work like \cite{deng2021cluster} jointly makes clustering and discrimination for alignment together with contrastive learning. The GAN-based methods originate from \cite{goodfellow2014generative} and build a min-max game for two players related to the source and target domains. \cite{ganin2015unsupervised} adopts domain to confuse the two players, while \cite{saito2018maximum} uses classifier discrepancy as the objective, and \cite{tang2020unsupervised} reveals that discriminative clustering on target will benefit the adaptation. However, the general domain adaptation setting needs access to source data, which raises concerns about data privacy and security, so Source-Free adaptation is proposed.

\paragraph{Source-Free Domain Adaptation.} Based on the work of \cite{li2020ma,liang2020we}, Source-Free has become the mainstream paradigm for alleviating concerns about data privacy and security in domain adaptation. There are two technique routes under the source-free setting: self-supervision and virtual source transfer. For the self-supervised methods, \cite{liang2020we} is the most representative one, which introduces information maximization to assist adaptation. \cite{xia2021adaptive} treats the problem from a discriminative perspective and adds a specific representation learning module to help the generalization, and \cite{chen2022contrastive} proposes the online pseudo label refinement. As for virtual source methods, most of them build GANs to generate virtual source data. \cite{kurmi2021domain} uses conditional GAN to generate new samples, while \cite{hou2021visualizing} provides interesting visualizations for unseen knowledge and \cite{li2020ma} applies collaborative GAN to achieve better generations. But Source-Free is still at risk of data leakage due to adversarial attacks to visible model weights, and that leads to a more strict setting -- Black-Box domain adaptation.

\begin{figure*}[htp]
  \centering  \includegraphics[width=0.96\linewidth]{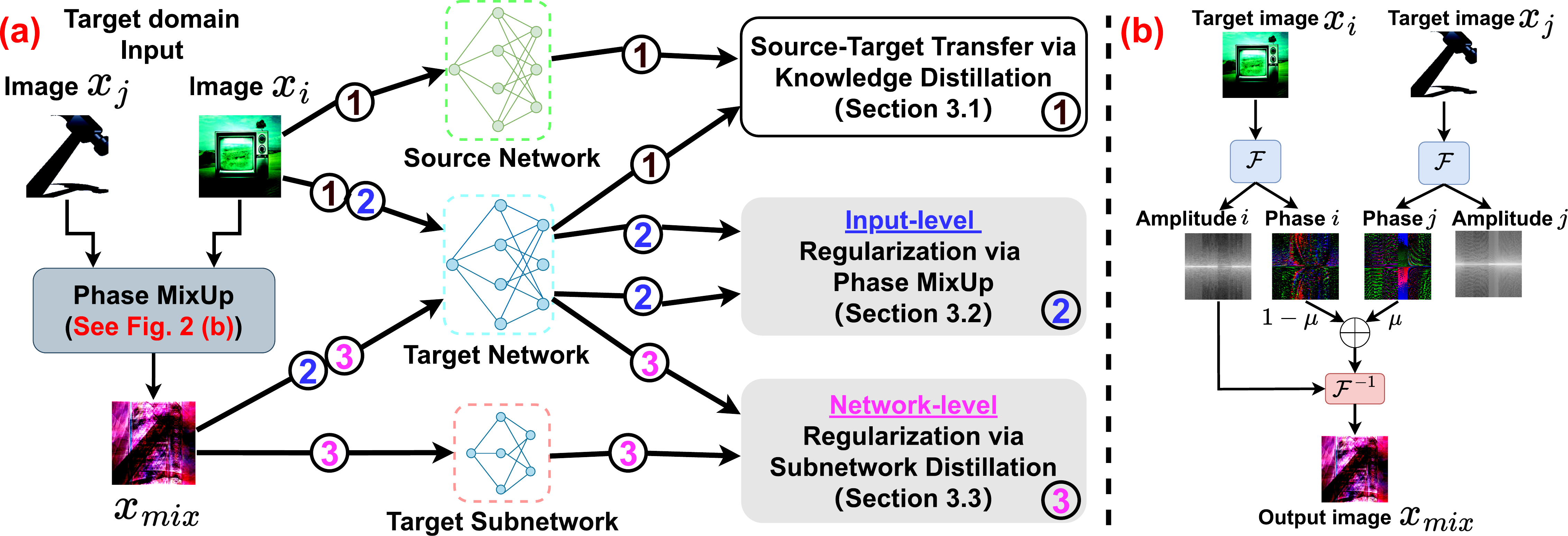}
  \caption{(a) Overview of our proposed framework. (b) The process of Phase MixUp (Best viewed in color).  In (a), the arrows marked with ``1" are related to the knowledge distillation described in Sec. \ref{sec:pre}, while the ones marked with ``2" illustrate the process of input-level regularization (Sec. \ref{sec:mixup}), and those with ``3" involve in the network-level regularization (Sec. \ref{sec:subnet}).  
  }
  \label{fig:framework}
\end{figure*}

\paragraph{Black-Box Domain Adaptation} is a subset problem of Source-Free domain adaptation and is also a very novel topic. It is more strict than Source-Free because the models trained on source will be put into a black box and \textit{only the outputs of these models can be used during model adaptation}. \cite{zhang2021unsupervised} first states this setting completely and emphasizes the purification of noisy labels in the adaptation. \cite{liang2021distill} introduces knowledge distillation to the black-box problem, which largely improves the performance. 
All the details are the same as the Source-Free setting except that the source model strictly follows the Black-Box rule as its weights are completely invisible while outputs are accessible. 

\section{Methodology}
Our proposed method RAIN tackles the Black-Box domain adaptation from the perspective of regularization (both input-level and network-level). The overall framework is presented in Fig. ~\ref{fig:framework}a. In the following subsections, we elaborate on the key components of the framework.

\subsection{Preliminary}\label{sec:pre}
For a typical domain adaptation problem, we have a source domain dataset $\mathcal{S} = \{(x^{s}_{i},y^{s}_{i})\}^{n_{s}}_{i=1}$ with $n_{s}$ labeled samples. The target domain dataset $\mathcal{T} = \{x^{t}_{i}\}^{n_{t}}_{i=1}$ includes $n_{t}$ unlabeled samples, which shares the same label space $\mathcal{D} = \{1,2,\cdots,K\}$ with source but lie in different distributions, where $K$ is the number of classes. The goal of domain adaptation is to seek the best target model $f_{t}$ with the help of source model as $f_{s}$. 

For the Black-Box paradigm, the learning starts with the supervised learning on source as $\mathcal{L}^{s} = - \mathbb{E}_{(x^{s}_{i},y^{s}_{i}) \in \mathcal{S}} \sum_{k \in D}l^{s}_{i}\log f_{s}(x^{s}_{i})$ with label smoothing \cite{muller2019does}: $l^{s}_{i} = \alpha / K + (1-\alpha)y^{s}_{i}$, where $\alpha$ is smoothing parameter empirically set to $0.1$. Moreover, the source model's details are completely unseen except for the model's outputs. In this case, knowledge distillation \cite{hinton2015distilling} is applied to transfer from source to target:

\begin{equation}
\label{eq:ps}
    \mathcal{L}_{kd} = \mathrm{D_{KL}}(\hat{y}_{i}^{t}||f_{t}(x^{t}_{i})),
\end{equation}
where $\mathrm{D_{KL}}(\cdot)$ denotes Kullback-Leibler (KL) Divergence and $\hat{y}_{i}^{t}$ is the pseudo-label. Now we explain how to obtain $\hat{y}_{i}^{t}$. Assume that $q = \argmax f_{s}(x_{i}^{t})$, then we deduce the smooth pseudo label $l^{t}_{i} = \alpha' / K + (1-\alpha')q$, where $\alpha'$ is smoothing parameter empirically set to $0.1$. Based on this, the pseudo label $\hat{y}_{i}^{t}$ can be represented as $\hat{y}_{i}^{t} = \eta l^{t}_{i} + (1-\eta)f_{t}(x_{i}^{t})$, where $\eta$ is a momentum hyperparameter set as 0.6. 

\subsection{Enhancing Input Regularization via Phase MixUp}\label{sec:mixup}

Conventional MixUp is a very popular input- and label-level regularization technique in domain adaptation \cite{wu2020dual,liang2021distill}, whose goal is to enhance class-wise consistency and linear behavior, thus helping the model learn better representations on target domain. Nevertheless, there exist noises in the pseudo-labels applied to MixUp, which are harmful to the adaptation process, and that's why we propose an input-level (only) regularization method here. 
Next details of the proposed Phase MixUp process are presented. We begin by introducing the standard format of the Fourier transform. Assume a target sample $x_{i}^{t} \in \mathbb{R}^{C\times H\times W}$, where $C$, $H$ and $W$ correspond to channel numbers, height and width. We transfer it from spatial space to frequency space and then decompose its frequency spectrum as amplitude and phase:
\begin{equation}
\label{eq:ft}
\footnotesize
    \mathcal{F}_{i}^{t} = \sum_{h=0}^{H-1} \sum_{w=0}^{W-1}x_{i}^{t}\exp[{-j2\pi(\frac{h}{H}u + \frac{w}{W}v)}] = \mathcal{A}_{i}^{t} \exp{(\mathcal{P}_{i}^{t})}.
\end{equation} Here we ensure the one-to-one correspondence between channels of different spaces. Here $x_{i}^{t}$ is a spatial image representation based on image pixel $(h,w)$, and $\mathcal{F}_{i}^{t}$ is a frequency image representation based on frequency spectrum unit $(u,v)$. $\mathcal{A}_{i}^{t}$ is the amplitude spectrum and $\mathcal{P}_{i}^{t}$ is the phase spectrum of the target sample $x_{i}^{t}$. According to \cite{yang2020fda,liu2021feddg}, the amplitude spectrum reflects the low-level distributions like the style, \emph{and the high-level semantics like object shape is stored in the phase spectrum.} Since our task here is domain adaptation for object recognition, we hope the mixup procedure focuses more on the key objects rather than the background information. Hence, we interpolate between phase spectra as:
\begin{equation}
\label{eq:mixup}
    \mathcal{P}_{mix}^{t} = \mu \mathcal{P}_{j}^{t}  + (1 - \mu) \mathcal{P}_{i}^{t},
\end{equation} 
where $\mathcal{P}_{j}^{t}$ is the phase spectrum from a randomly-selected target sample $x_{j}^{t}$ as $\mathcal{F}(x_{j}^{t}) = \mathcal{A}_{j}^{t} \exp{(\mathcal{P}_{j}^{t})}$, and $\mu$ is sampled from a Beta distribution as $\mathbf{Beta}(0.3, 0.3)$. 
After that, the Phase MixUp augmented sample produced by inverse Fourier Transform is:
\begin{equation}
\footnotesize{
\begin{multlined}
    x_{mix}^{t} = \mathcal{F}^{-1}(\mathcal{A}_{i}^{t}, \mathcal{P}_{mix}^{t}) \\= \frac{1}{HW}\sum_{u=0}^{H-1} \sum_{v=0}^{W-1} \mathcal{A}_{i}^{t} \exp{(\mathcal{P}_{mix}^{t})}\exp[{-j2\pi(\frac{u}{H}h + \frac{v}{W}w)}].
\end{multlined}
}
\end{equation} The Phase MixUp procedure is depicted in Fig. \ref{fig:framework}b. After obtaining the synthesized sample, we can enhance class consistency by comparing the outputs of the original and synthesized samples. Here $\ell_1$-norm is utilized to compute the Phase MixUp loss as:
\begin{equation}
\label{eq:mix}
      \mathcal{L}_{pm} =  \mathbb{E}_{x_{i}^{t} \in T}\left\Vert f_{t}(x_{i}^{t}) - f_{t}(x_{mix}^{t}) \right\Vert_{\ell_1}.
\end{equation}Phase MixUp is different from conventional MixUp. First, our Phase MixUp is conducted on the phase spectra related to core objects, not the whole image. Moreover, conventional MixUp operates on both input- and label-level, while Phase MixUp is an input-level augmentation.

\begin{figure}
  \centering
  \includegraphics[width=0.97\linewidth]{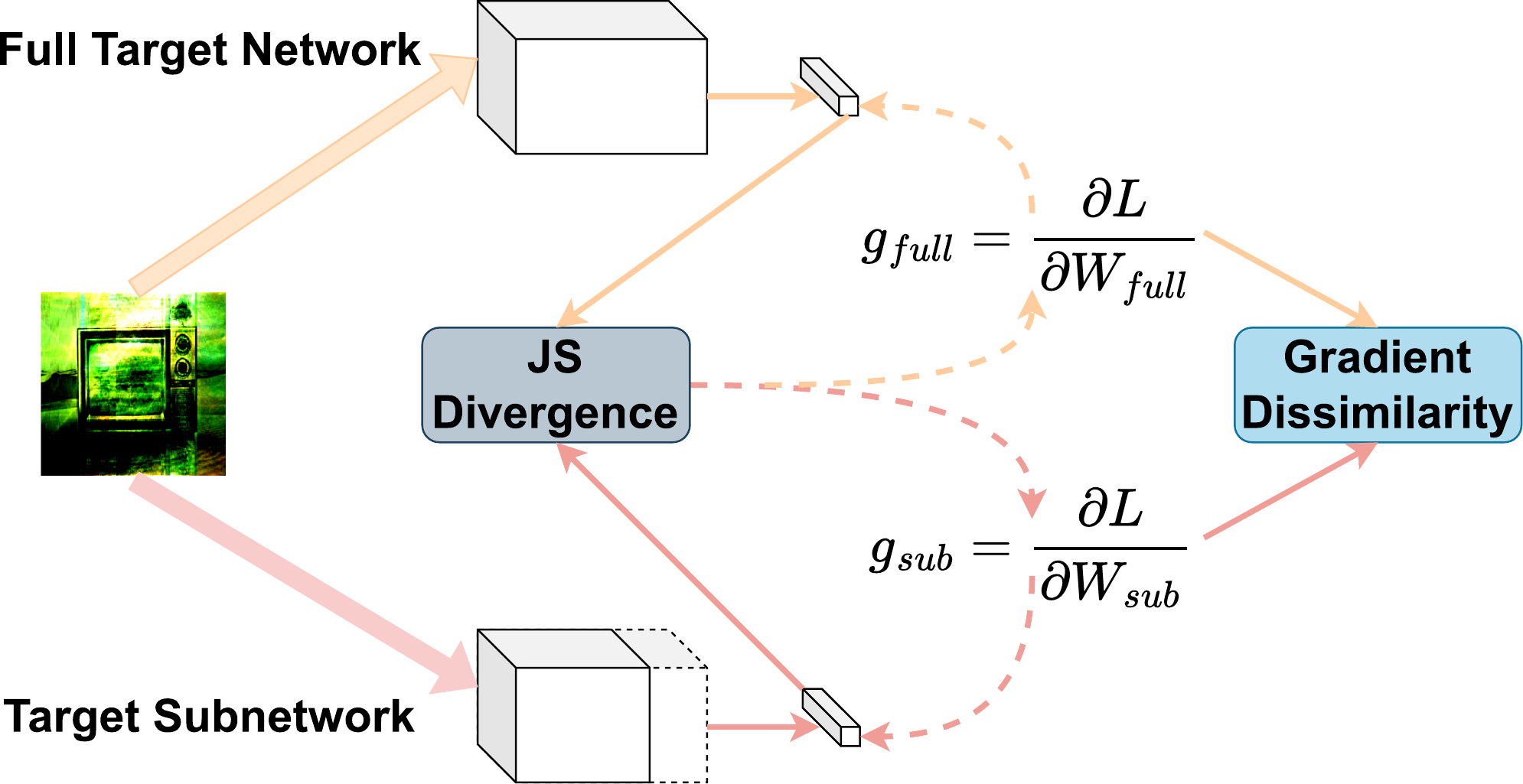}
  \caption{The training process of Subnetwork Distillation (Best viewed in color.) The orange arrows associate with the full target network adaptation and the pink arrows correspond to the target subnetwork adaptation. During the optimization of JS Divergence and Gradient Dissimilarity, the model obtains knowledge of diverse target representations that benefit the generalization.}
  \label{fig:mutual}
\end{figure}

\subsection{Encouraging Network Regularization via Subnetwork Distillation}\label{sec:subnet}

During the procedure of knowledge transfer from source models to target models with knowledge distillation (Eq. \ref{eq:ps}), overfitting on source information is an obvious side effect. Especially in the Black-Box setting, only the source model's outputs are visible while the source model's weights are completely unseen. In other words, the careful calibration of the target network's weights in the Source-Free setting cannot be achieved here. Hence, it is necessary to propose a specific network-based regularization method.
To this end, we propose the Subnetwork Distillation approach to domain adaptation, which utilizes the self-knowledge transfer from the target subnetwork to the full target network with distinctive knowledge. We hope this structure can assist the model to obtain more target information from diverse representations far from the support of source, thus overcoming overfitting on source. 
We denote the full target network's weights as $W_{full}$ and the target subnetwork as $W_{sub}$. If the complete network can be represented as $W_{full} = W_{0:1}$, then by slimming the network width with a ratio $\alpha \in (0,1]$, the subnetwork can be generated as $W_{sub} = W_{0:\alpha}$, i.e., a subnetwork with $W_{0:\alpha}$ means selecting the first $\alpha \times 100\%$ weights of each layer of the full network. 
Therefore, the network's output with width $\alpha$ is denoted as $f_{t}(x_{i}^{t}; W_{0:\alpha})$. 
The Subnetwork Distillation objective is defined as
\begin{equation}
\label{eq:ml}
      \mathcal{L}_{sd} = \mathrm{D_{JS}}(f_{t}(x_{mix}^{t}; W_{sub})|| f_{t}(x_{mix}^{t}; W_{full})).
\end{equation}
After getting these outputs from the subnetwork with smaller widths, we compare them with the original network's outputs using Jensen–Shannon (JS) divergence, shown in Fig. \ref{fig:mutual}. To prevent the adverse influence on the inference with original images and full networks, here we use the images operated after Phase MixUp, which can also add perturbations to the regularization for more robust representations.

Now we provide a theoretical analysis to illustrate why the JS divergence can benefit the adaptation process. Assume that Phase MixUp is a mapping function $f_{PM}:x\rightarrow z$, and the following neural network is $f_{network}:z\rightarrow y$. Here we have three types of networks, the source network $f_{s}$, the full target network $f_{t}$, and the target subnetwork $f_{sub}$. Besides, we make two assumptions that are intuitive to understand:
\begin{assumption}
    Based on observed latent variables $z$ and an empirical predictor $\hat{p}$, $-\log\hat{p}(y|z)$ can be bounded by $C$, which is a constant.
\end{assumption}

\begin{assumption}
The target subnetwork is superior to the source network on the target datasets. Mathematically, 
\begin{equation}
    p_{s}(y,z)\log\hat{p}(y|z) \leq p_{sub}(y,z)\log\hat{p}(y|z).
\end{equation}
\end{assumption}

What's more, there is a lemma that assists our main conclusion Theorem \ref{theorem}:
\begin{lemma}
    The source loss and target loss are bounded by joint distributions and empirical predictions as:
    \begin{equation}
       l_{s} \leq \mathbb{E}_{p_{s}(y,z)}[-\log\hat{p}(y|z)], ~~~
         l_{t} \leq \mathbb{E}_{p_{t}(y,z)}[-\log\hat{p}(y|z)].
     \end{equation}
\end{lemma}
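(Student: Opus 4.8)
The plan is to recognize both $l_s$ and $l_t$ as the ideal (Bayes-optimal) losses on their respective domains, namely the conditional entropies of the true joint distributions over the latent space $z = f_{PM}(x)$, and then to obtain the stated upper bounds as a direct consequence of the information inequality (Gibbs' inequality). Concretely, I would first fix the reading $l_s = \mathbb{E}_{p_s(y,z)}[-\log p_s(y|z)]$ and $l_t = \mathbb{E}_{p_t(y,z)}[-\log p_t(y|z)]$, where $p_s(y|z)$ and $p_t(y|z)$ are the true conditional label distributions induced on $z$ for the source and target domains. These are the smallest achievable expected negative log-likelihoods on each domain, since no predictor can beat the true conditional, so it is natural that replacing the true conditional by the empirical predictor $\hat{p}$ can only increase the loss.

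The core step is a single decomposition carried out identically for each domain. For the source I would write
\begin{equation}
\mathbb{E}_{p_s(y,z)}[-\log \hat{p}(y|z)] = \mathbb{E}_{p_s(y,z)}[-\log p_s(y|z)] + \mathbb{E}_{p_s(z)}\big[\mathrm{D_{KL}}(p_s(y|z)\,||\,\hat{p}(y|z))\big],
\end{equation}
which splits the cross-entropy (the right-hand side of the lemma) into the conditional entropy $l_s$ plus an expected KL divergence between the true conditional and $\hat{p}$. Because KL divergence is non-negative, the second summand is $\geq 0$, which immediately yields $l_s \leq \mathbb{E}_{p_s(y,z)}[-\log \hat{p}(y|z)]$. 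Substituting $p_t$ for $p_s$ throughout gives the target bound by the same computation, so both inequalities follow in one stroke.

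Since the argument rests only on the non-negativity of KL divergence and requires no structural assumptions about the networks, I do not expect a genuine computational obstacle. The main subtlety is definitional rather than technical: one must pin down precisely what ``source loss'' and ``target loss'' mean in the paper's framework so that the decomposition lines up, i.e. that $l_s$ and $l_t$ are identified with the conditional entropies of the true joint distributions $p_s(y,z)$ and $p_t(y,z)$ on the Phase-MixUp latent space. Once this identification is fixed, the lemma reduces to the classical fact that cross-entropy upper-bounds entropy, and the two bounds then plug directly into Theorem~\ref{theorem}, where they are combined with Assumptions~1 and~2 to control the gap between source and target performance.
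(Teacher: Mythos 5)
Your decomposition (cross-entropy $=$ conditional entropy $+$ expected KL, then non-negativity of KL) is internally valid, but it proves a different statement from the paper's, and the difference lies exactly in the definitional choice you flagged and then resolved the wrong way. In the paper's proof, $l_s$ is \emph{not} the Bayes-optimal loss $\mathbb{E}_{p_s(y,z)}[-\log p_s(y|z)]$; it is the risk of the \emph{same} empirical predictor $\hat{p}$ evaluated on raw inputs,
\begin{equation*}
l_s = \mathbb{E}_{p_s(x,y)}[-\log \hat{p}(y|x)], \qquad \hat{p}(y|x) = \mathbb{E}_{p(z|x)}\left[\hat{p}(y|z)\right],
\end{equation*}
i.e.\ the prediction on $x$ is obtained by pushing $x$ through the (stochastic) Phase MixUp map and averaging $\hat{p}(y|z)$ over $p(z|x)$. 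The lemma thus compares evaluating $\hat{p}$ \emph{before} versus \emph{after} marginalizing the latent variable, and the paper's entire proof is a single application of Jensen's inequality to the convex function $-\log$, namely $-\log \mathbb{E}_{p(z|x)}[\hat{p}(y|z)] \leq \mathbb{E}_{p(z|x)}[-\log \hat{p}(y|z)]$, followed by collapsing $\mathbb{E}_{p_s(x,y)}\mathbb{E}_{p(z|x)}$ into $\mathbb{E}_{p_s(y,z)}$. Your argument never touches this marginalization structure; Gibbs' inequality compares $\hat{p}$ against the \emph{true} conditional, which is a different comparison.

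The mismatch is not cosmetic, because of how the lemma feeds into Theorem 1. Under your reading, $l_t$ is the conditional entropy of the target domain---a fixed constant that does not depend on the trained networks at all---so the theorem's conclusion $l_t \leq l_s + C\sqrt{2\mathrm{D_{JS}}(p_t \| p_{sub})}$ would bound a model-independent quantity and could not support the paper's claim that \emph{minimizing} the JS term improves the learned target model. Under the paper's reading, $l_t$ is the target risk of the predictor being trained, which is what makes the bound actionable as a training objective. So although ``entropy $\leq$ cross-entropy'' is true and yields a formally similar-looking inequality, the quantity you bound is not the one the downstream theorem needs; the missing idea is the Jensen step relating the predictor's input-space risk to its latent-space cross-entropy.
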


On the basis of the proposed assumptions and lemma, we conclude the bound for the target loss as:

\begin{theorem}
\label{theorem}
The target loss can be bounded by source loss and the JS divergence between the outputs of the full target network and target subnetwork as:
\begin{equation}
    l_{t} \leq l_{s} + C\sqrt{2\mathrm{D_{JS}}(p_{t}(y,z)||p_{sub}(y,z))},
\end{equation}
\end{theorem}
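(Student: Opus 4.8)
The plan is to start from the target half of the Lemma and rewrite the resulting cross-entropy so that the subnetwork distribution $p_{sub}$ and the source loss both appear. First I would invoke the Lemma to pass to the upper bound $l_t \leq \mathbb{E}_{p_t(y,z)}[-\log\hat{p}(y|z)]$, putting everything in terms of the single empirical predictor $\hat{p}$. Then I would perform an add-and-subtract decomposition about $p_{sub}$,
\[
\mathbb{E}_{p_t(y,z)}[-\log\hat{p}(y|z)] = \mathbb{E}_{p_{sub}(y,z)}[-\log\hat{p}(y|z)] + \sum_{y,z}\bigl(p_t(y,z)-p_{sub}(y,z)\bigr)\bigl(-\log\hat{p}(y|z)\bigr),
\]
which separates a ``subnetwork quality'' term from a ``distribution mismatch'' term that I can attack with the two assumptions separately.

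For the quality term I would sum Assumption 2 over $(y,z)$ to obtain $\mathbb{E}_{p_{sub}}[-\log\hat{p}] \leq \mathbb{E}_{p_s}[-\log\hat{p}]$, and then identify this source cross-entropy with $l_s$ via the source half of the Lemma. For the mismatch term I would use Assumption 1: since $0 \leq -\log\hat{p}(y|z) \leq C$, the coordinates with $p_t < p_{sub}$ contribute non-positively, so the whole term is at most $C\sum_{y,z:\,p_t \geq p_{sub}}(p_t-p_{sub})$, which equals $C\,\mathrm{TV}(p_t,p_{sub})$ because the positive part of the signed difference of two distributions is exactly their total variation distance.

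The last and most technical step is turning total variation into the JS divergence stated in the theorem. Writing $m=(p_t+p_{sub})/2$ and applying Pinsker's inequality to each KL term inside $\mathrm{D_{JS}}(p_t\|p_{sub}) = \tfrac12\mathrm{D_{KL}}(p_t\|m) + \tfrac12\mathrm{D_{KL}}(p_{sub}\|m)$, together with the identities $\mathrm{TV}(p_t,m)=\mathrm{TV}(p_{sub},m)=\tfrac12\mathrm{TV}(p_t,p_{sub})$, yields $\mathrm{D_{JS}}(p_t\|p_{sub}) \geq \tfrac12\mathrm{TV}(p_t,p_{sub})^2$, hence $\mathrm{TV}(p_t,p_{sub}) \leq \sqrt{2\,\mathrm{D_{JS}}(p_t\|p_{sub})}$. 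Chaining the three bounds then delivers $l_t \leq l_s + C\sqrt{2\,\mathrm{D_{JS}}(p_t\|p_{sub})}$ exactly.

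I expect the genuine obstacle to be the quality term rather than the divergence manipulation. Assumption 2 gives $\mathbb{E}_{p_{sub}}[-\log\hat{p}] \leq \mathbb{E}_{p_s}[-\log\hat{p}]$, while the Lemma only supplies $l_s \leq \mathbb{E}_{p_s}[-\log\hat{p}]$ in the \emph{same} direction, so identifying the source cross-entropy with $l_s$ requires treating the source bound as essentially tight (or taking $l_s$ to be the cross-entropy itself); I would make this identification explicit rather than leave it implicit. By contrast, Assumption 1 is precisely what keeps the mismatch term linear in the probability gap, and the total-variation-to-JS conversion is routine once Pinsker is applied to the mixture $m$.
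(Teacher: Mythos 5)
Your proof is correct and follows essentially the same route as the paper's: invoke the Lemma, perform an add-and-subtract decomposition resolved by a single application of Assumption~2, bound the mismatch term by $C$ times the positive part of $p_t-p_{sub}$ (which equals $\tfrac12\int|p_t-p_{sub}|$, the paper's step $(*)$), and convert total variation to $\sqrt{2\mathrm{D_{JS}}}$ via Pinsker applied through the mixture $p_m=\tfrac12(p_t+p_{sub})$. The only differences are cosmetic --- the paper pivots its decomposition on $p_s$ and applies Assumption~2 to the mismatch term where you pivot on $p_{sub}$ and apply it to the quality term (algebraically identical), and it uses a triangle inequality plus $\sqrt{a}+\sqrt{b}\le\sqrt{2(a+b)}$ where you use the midpoint identity $\mathrm{TV}(p_t,p_m)=\mathrm{TV}(p_{sub},p_m)=\tfrac12\mathrm{TV}(p_t,p_{sub})$ --- and the direction issue you rightly flag in identifying $\mathbb{E}_{p_s}[-\log\hat{p}(y|z)]$ with $l_s$ is present, unacknowledged, in the paper's own proof, which writes that substitution as an equality justified by the Lemma even though the Lemma only gives $l_s\le\mathbb{E}_{p_s}[-\log\hat{p}(y|z)]$.
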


$\mathrm{D_{JS}}(p_{t}(y,z)||p_{sub}(y,z))$ corresponds to Eq. \ref{eq:ml}, which proves that the optimization of the JS divergence can benefit the adaptation process on the target domain. 

There exist two extremes that hinder our goal. One is at the \emph{beginning} of adaptation, when the subnetwork is very different from the full network, but it owns much greater errors than the full network, and the knowledge transfer is negative to the adaptation. The other is at the \emph{end} of adaptation, subnetwork is very similar to the full network so that not enough knowledge can be transferred from the subnetwork to the full network. In such a case, the Subnetwork Distillation is meaningless. 
Provided that the gradient of the full network is $g_{full} = \frac{\partial \mathcal{L}_{sd}}{\partial W_{full}}$, and the gradient of the subnetwork is $g_{sub} = \frac{\partial \mathcal{L}_{sd}}{\partial W_{sub}}$, then we propose a weighted gradient discrepancy loss to balance this two extremes:
\begin{equation}
\footnotesize{
\label{eq:grad}
    \mathcal{L}_{wg} = (1 + \exp{(-\mathbf{Entropy}(f_{t}(x_{i}^{t}; W_{sub})))})\frac{g_{full}^{T} g_{sub}}{\left\Vert g_{full} \right\Vert_2 \left\Vert g_{sub} \right\Vert_2}.
}
\end{equation} Here the left term is the weight and the right term is the cosine similarity between $g_{sub}$ and $g_{full}$. By minimizing the right term, the discrepancy between two gradients is enlarged, which provides a disturbance and guarantees that the full network and subnetwork learn divergent knowledge. The left term offers constraints on the gradient dissimilarity so that if the subnetwork doesn't learn distributions that are confident enough, a smaller weight will be assigned, and vice versa. 

\subsection{Overall Objectives}

Based on the above discussion, we conclude the final objective for Black-Box domain adaptation:
\begin{equation}
\label{eq:total-bbda}
    \mathcal{L}_{bb} = \mathcal{L}_{kd} + \beta \mathcal{L}_{pm} + \gamma \mathcal{L}_{sd} + \theta \mathcal{L}_{wg},
\end{equation} 
where $\beta, \gamma$, and $\theta$ are the trade-off hyperparameters for corresponding loss functions.  
\begin{table*}[!ht]
    \centering
    
    \renewcommand{\arraystretch}{1.0}
    \resizebox{0.99\linewidth}{19mm}{%
    \begin{tabular}{Sc Sc Sc Sc Sc Sc Sc a}
         \toprule
          \textbf{\textcolor{red}{(a) Office-31}} &  A$\rightarrow$ D &  A$\rightarrow$ W  &  D$\rightarrow$ A & D$\rightarrow$ W & W$\rightarrow$ A &  W$\rightarrow$ D & \textbf{Avg.}\\
          
         \hline
         {Source-only} & 79.9/88.2 & 76.6/89.2 & 56.4/74.5 & 92.8/97.2 & 60.9/77.2 & 98.5/99.3 & 77.5/87.6\\
         \hline
         
         {NLL-OT} & 88.8/91.3 & 85.5/91.4 & 64.6/76.4 & 95.1/97.2 & 66.7/78.2 & 98.7/99.4 & 83.2/89.0\\
         {NLL-KL} & 89.4/91.7 & 86.8/91.8 & 65.1/76.3 & 94.8/97.2 & 67.1/78.4 & 98.7/99.0 & 83.6/89.1\\
         {HD-SHOT} & 86.5/88.9 & 83.1/99.9 & 66.1/75.3 & 95.1/97.7 & 68.9/77.7 & 98.1/99.5 & 83.0/88.3\\
         {SD-SHOT} & 89.2/91.6 & 83.7/92.8 & 67.9/77.8 & 95.3/\textbf{98.7} & 71.1/78.5 & 97.1/\textbf{99.7} & 84.1/89.8\\
         {DINE} & 91.6/94.2 & 86.8/94.6 & 72.2/80.7 & 96.2/98.8 & 73.3/81.5 & 98.6/99.5 & 86.4/91.6\\
         {DINE-full} & 91.7/95.5 & 87.5/94.8 & 72.9/81.2 & 96.3/98.5 & 73.7/82.0 & 98.5/\textbf{99.7} & 86.7/91.9\\
         \hline
         {\textbf{RAIN}} & \Centerstack{\textbf{93.8}$\pm0.20$/\\ \textbf{96.2}$\pm0.20$ }& \Centerstack{\textbf{88.8}$\pm0.10$/\\\textbf{95.7}$\pm0.10$} & \Centerstack{\textbf{75.5}$\pm0.14$/\\ \textbf{83.6}$\pm0.07$} & \Centerstack{\textbf{96.8}$\pm0.10$/\\98.6$\pm0.10$ } & \Centerstack{\textbf{76.7}$\pm0.14$/\\\textbf{84.1}$\pm0.21$} & \Centerstack{\textbf{99.5}$\pm0.10$/\\\textbf{99.7}$\pm0.20$} & \Centerstack{\textbf{88.5}$\pm0.10$/\\\textbf{93.0}$\pm0.10$}\\
         \bottomrule
    \end{tabular}} \par\vskip-1.4pt
    
    
    \resizebox{0.99\linewidth}{20mm}{%
    \begin{tabular}{Sc Sc Sc Sc Sc Sc Sc Sc Sc Sc Sc Sc Sc a}
         \toprule
          \textbf{\textcolor{amethyst}{(b) Office-Home}} &  Ar$\rightarrow$ Cl &  Ar$\rightarrow$ Pr  & Ar$\rightarrow$ Rw &   Cl$\rightarrow$ Ar & Cl$\rightarrow$ Pr & Cl$\rightarrow$ Rw & Pr$\rightarrow$ Ar &  Pr$\rightarrow$ Cl & Pr$\rightarrow$ Rw & Rw$\rightarrow$ Ar &  Rw$\rightarrow$ Cl & Rw$\rightarrow$ Pr & \textbf{Avg.} \\
          \hline
         {Source-only} & 44.1/54.5 & 66.9/83.2 & 74.2/87.2 & 54.5/78.0 & 63.3/83.8 & 66.1/86.1 & 52.8/74.5 & 41.2/49.7 & 73.2/87.4 & 66.1/78.6 & 46.7/52.6 & 77.5/86.2 & 60.6/75.1\\ 
         \hline
         
         {NLL-OT} & 49.1/58.8 & 71.7/84.4 & 77.3/87.6 & 60.2/78.2 & 68.7/84.7 & 73.1/86.7 & 57.0/76.0 & 46.5/54.0 & 76.8/88.0 & 67.1/79.7 & 52.3/57.2 & 79.5/87.2 & 64.9/76.9\\
         {NLL-KL} & 49.0/59.5 & 71.5/84.3 & 77.1/87.6 & 59.0/77.4 & 68.7/84.8 & 72.9/86.8 & 56.4/75.1 & 46.9/54.9 & 76.6/88.0 & 66.2/79.0 & 52.3/57.9 & 79.1/87.2 & 64.6/76.9\\
         {HD-SHOT} & 48.6/57.2 & 72.8/84.2 & 77.0/87.3 & 60.7/78.4 & 70.0/84.9 & 73.2/86.4 & 56.6/74.8 & 47.0/56.0 & 76.7/87.6 & 67.5/78.9 & 52.6/57.5 & 80.2/87.0 & 65.3/76.7\\
         {SD-SHOT} & 50.1/59.4 & 75.0/85.2 & 78.8/87.8 & 63.2/79.6 & 72.9/86.6 & 76.4/87.1 & 60.0/76.4 & 48.0/58.3 & 79.4/87.8 & 69.2/80.0 & 54.2/59.5 & 81.6/87.9 & 67.4/78.0\\
         {DINE} & 52.2/64.9 & 78.4/87.4 & 81.3/88.8 & 65.3/80.5 & 76.6/\textbf{89.6} & 78.7/87.8 & 62.7/79.0 & 49.6/\textbf{62.9} & 82.2/89.1 & 69.8/81.5 & 55.8/64.6 & 84.2/\textbf{90.0} & 69.7/80.5\\
         {DINE-full} & 54.2/64.4 & 77.9/87.9 & 81.6/89.0 & 65.9/80.9 & 77.7/\textbf{89.6} & 79.9/88.7 & 64.1/79.6 & 50.5/62.5 & 82.1/89.4 & 71.1/81.7 & 58.0/65.2 & 84.3/89.7 & 70.6/80.7\\
         \hline
         {\textbf{RAIN}} & \Centerstack{\textbf{57.0}$\pm0.06$/\\\textbf{66.3}$\pm0.12$} & \Centerstack{\textbf{79.7}$\pm0.08$/\\\textbf{88.8}$\pm0.12$} & \Centerstack{\textbf{82.8}$\pm0.04$/\\\textbf{90.1}$\pm0.08$} & \Centerstack{\textbf{67.9}$\pm0.04$/\\\textbf{82.0}$\pm0.04$} & \Centerstack{\textbf{79.5}$\pm0.06$/\\89.5$\pm0.12$} & \Centerstack{\textbf{81.2}$\pm0.04$/\\\textbf{89.2}$\pm0.04$} & \Centerstack{\textbf{67.7}$\pm0.04$/\\\textbf{80.7}$\pm0.08$} & \Centerstack{\textbf{53.2}$\pm0.08$/\\\textbf{62.9}$\pm0.12$} & \Centerstack{\textbf{84.6}$\pm0.12$/\\\textbf{90.6}$\pm0.12$} & \Centerstack{\textbf{73.3}$\pm0.04$/\\\textbf{82.9}$\pm0.08$} & \Centerstack{\textbf{59.6}$\pm0.06$/\\\textbf{65.8}$\pm0.06$} & \Centerstack{\textbf{85.6}$\pm0.08$/\\89.8$\pm0.04$} & \Centerstack{\textbf{73.0}$\pm0.08$/\\\textbf{81.6}$\pm0.16$}\\
         \bottomrule
         
    \end{tabular}} \par\vskip-1.4pt
    
    
    \resizebox{0.99\linewidth}{19.0mm}{%
    \begin{tabular}{Sc Sc Sc Sc Sc Sc Sc Sc Sc Sc Sc Sc Sc a}
         \toprule
         \textbf{\textcolor{blue}{(c) VisDA-C}} & plane &  bcycl  & bus &  car & horse & knife & mcycl &  person & plant & sktbrd &  train & truck & \textbf{Avg.} \\
         \hline
         {Source-only} & 64.3/97.0 & 24.6/56.2 & 47.9/81.0 & 75.3/74.4 & 69.6/91.8 & 8.5/52.0 & 79.0/92.5 & 31.6/10.1 & 64.4/73.4 & 31.0/92.7 & 81.4/97.0 & 9.2/17.5 & 48.9/69.6\\
         \hline
         {NLL-OT} & 82.6/\textbf{97.8} & 84.1/90.8 & 76.2/81.9 & 44.8/49.7 & 90.8/95.7 & 39.1/93.5 & 76.7/85.2 & 72.0/45.4 & 82.6/88.9 & 81.2/96.6 & 82.7/91.2 & 50.6/54.4 & 72.0/80.9\\
         {NLL-KL} & 82.7/97.6 & 83.4/91.1 & 76.7/82.1 & 44.9/49.2 & 90.9/95.8 & 38.5/93.5 & 78.4/\textbf{86.2} & 71.6/44.6 & 82.4/89.0 & 80.3/96.4 & 82.9/91.4 & 50.4/54.8 & 71.9/81.0\\
         {HD-SHOT} & 75.8/96.7 & 85.8/91.7 & 78.0/81.8 & 43.1/48.4 & 92.0/95.1 & 41.0/\textbf{98.5} & 79.9/83.1 & 78.1/60.1 & 84.2/\textbf{92.2} & 86.4/87.7 & 81.0/88.4 & \textbf{65.5}/\textbf{65.3} & 74.2/82.4\\
         {SD-SHOT} & 79.1/96.3 & 85.8/91.1 & 77.2/80.3 & 43.4/46.4 & 91.6/93.9 & 41.0/98.2 & 80.0/81.5 & 78.3/58.6 & 84.7/90.9 & 86.8/85.5 & 81.1/88.0 & 65.1/63.8 & 74.5/81.2\\
         {DINE} & 81.4/96.6 & 86.7/91.9 & 77.9/83.1 & 55.1/58.2 & 92.2/95.3 & 34.6/97.8 & 80.8/85.0 & 79.9/73.6 & 87.3/91.9 & 87.9/94.9 & 84.3/92.2 & 58.7/60.7 & 75.6/85.1\\
         {DINE-full} & 95.3/96.6 & 85.9/91.9 & 80.1/82.9 & 53.4/57.9 & 93.0/95.4 & 37.7/97.8 & 80.7/84.5 & 79.2/73.1 & 86.3/91.7 & 89.9/95.1 & 85.7/92.0 & 60.4/60.9 & 77.3/85.0\\
         \hline
         {\textbf{RAIN}} & \Centerstack{\textbf{96.6}$\pm0.09$/\\97.7$\pm0.09$} & \Centerstack{\textbf{86.8}$\pm0.09$/\\\textbf{92.8}$\pm0.09$} & \Centerstack{\textbf{83.0}$\pm0.06$/\\\textbf{86.2}$\pm0.12$} & \Centerstack{\textbf{70.9}$\pm0.04$/\\\textbf{72.3}$\pm0.08$} & \Centerstack{\textbf{94.5}$\pm0.08$/\\\textbf{96.5}$\pm0.04$} & \Centerstack{\textbf{81.8}$\pm0.10$/\\98.0$\pm0.10$} & \Centerstack{\textbf{84.2}$\pm0.06$/\\\textbf{86.2}$\pm0.12$} & \Centerstack{\textbf{83.6}$\pm0.09$/\\\textbf{83.2}$\pm0.09$} & \Centerstack{\textbf{90.9}$\pm0.08$/\\92.1$\pm0.16$} & \Centerstack{\textbf{89.5}$\pm0.08$/\\\textbf{96.9}$\pm0.08$} & \Centerstack{\textbf{89.4}$\pm0.06$/\\\textbf{93.3}$\pm0.06$} & \Centerstack{64.0$\pm0.08$/\\60.7$\pm0.08$} & \Centerstack{\textbf{82.7}$\pm0.09$/\\\textbf{86.6}$\pm0.09$}\\
         \bottomrule
    \end{tabular}}

    \caption{Single-Source Domain Adaptation Accuracy (\%) on (a) Office-31, (b) Office-Home, and (c) VisDA-C. In each cell, the value before the forward slash $/$ origins from ResNet-based source model, and the one after $/$ relies on ViT-based source model.}
    \label{tab:ss-bbda}
\end{table*}

\begin{table*}[!ht]
\centering
\renewcommand{\arraystretch}{0.9}
\resizebox{0.99\linewidth}{17.5mm}{
    \begin{tabular}{Sc|Sc Sc Sc a|Sc Sc Sc a|Sc Sc Sc Sc a}
    \toprule
         Dataset & \multicolumn{4}{c|}{\textcolor{red}{Office-31}} & \multicolumn{4}{c|}{\textcolor{blue}{Image-CLEF}} & \multicolumn{5}{c}{\textcolor{amethyst}{Office-Home}}  \\ 
         \hline
         Method & $\rightarrow$A & $\rightarrow$D & $\rightarrow$W & \textbf{Avg.} & $\rightarrow$C & $\rightarrow$I & $\rightarrow$P & \textbf{Avg.} & $\rightarrow$Ar & $\rightarrow$Cl & $\rightarrow$Pr & $\rightarrow$Rw & \textbf{Avg.}\\
         \hline
         No Adapt. & 64.5/77.2 & 82.3/88.2 & 80.7/89.2 & 75.8/84.9 & 92.1/95.3 & 87.4/90.2 & 72.4/72.0 & 84.0/85.9 & 54.9/74.5 & 49.9/54.5 & 69.6/83.2 & 76.7/87.2 & 62.8/74.8\\
         SD-DECISION & 66.6/80.0 & 87.3/90.4 & 85.7/95.9 & 80.0/88.8 & 93.5/95.0 & 89.6/91.8 & 74.1/76.6 & 85.7/87.8 & 62.5/77.2 & 51.9/55.8 & 72.3/85.3 & 80.4/88.8 & 66.8/76.8   \\
         DINE w/o FT & 69.2/80.7 & 98.6/98.4 & 96.9/97.1 & 88.3/92.1 & 96.2/97.2 & 91.4/96.6 & 78.3/80.9 & 88.6/91.6 & 70.8/82.4 & 57.1/61.0 & 80.9/88.6 & 82.1/90.8 & 72.7/80.7  \\
         DINE & 76.8/82.4 & 99.2/99.2 & 98.4/98.4 & 91.5/93.4 & 98.0/97.8 & 93.4/96.6 & 80.2/81.3 & 90.5/91.9 & 74.8/83.6 & 64.1/67.0 & 85.0/90.9 & 84.6/91.9 & 77.1/83.3\\
         DINE-full & 77.1/81.4 & 99.2/99.0 & 98.2/98.5 & 91.5/93.0 & 97.8/97.8 & 93.0/96.4 & 79.7/81.4 & 90.2/91.9 & 74.9/83.4 & 62.6/65.2 & 84.6/90.3 & 84.7/91.5 & 76.7/82.6\\
         \hline
         \textbf{RAIN} & \Centerstack{\textbf{79.8}$\pm$0.14/\\\textbf{84.5}$\pm$0.14}&\Centerstack{\textbf{99.8}$\pm0.20$/\\\textbf{99.2}$\pm0.20$}&\Centerstack{\textbf{99.0}$\pm0.10$/\\\textbf{98.6}$\pm0.10$}&\Centerstack{\textbf{92.9}$\pm0.10$/\\\textbf{94.1}$\pm0.10$}&\Centerstack{\textbf{98.4}$\pm0.17$/\\\textbf{98.0}$\pm0.17$}&\Centerstack{\textbf{94.2}$\pm0.17$/\\\textbf{96.6}$\pm0.17$}&\Centerstack{\textbf{82.0}$\pm0.17$/\\\textbf{83.2}$\pm0.17$}&\Centerstack{\textbf{91.5}$\pm0.17$/\\\textbf{92.6}$\pm0.17$}&\Centerstack{\textbf{76.0}$\pm0.08$/\\\textbf{84.0}$\pm0.08$}&\Centerstack{\textbf{65.6}$\pm0.12$/\\\textbf{68.5}$\pm0.12$}&\Centerstack{\textbf{85.8}$\pm0.12$/\\\textbf{92.0}$\pm0.12$}&\Centerstack{\textbf{84.8}$\pm0.08$/\\\textbf{93.0}$\pm0.08$}&\Centerstack{\textbf{78.1}$\pm0.16$/\\\textbf{84.4}$\pm0.16$}\\
         \bottomrule
    \end{tabular}}
\caption{Multi-Source Domain Adaptation Accuracy (\%) on Office-31, Image-CLEF, and Office-Home.  In each cell, the value before the forward slash $/$ origins from ResNet-based source model, and the one after $/$ relies on ViT-based source model.}
\label{tab:ms-bbda}
\end{table*}

\section{Experiments}
\paragraph{Datasets. }We use four popular benchmark datasets for evaluation. \textbf{Office-31} \cite{saenko2010adapting} has three domains as Amazon, Webcam, and DSLR with 31 classes and 4,652 images. \textbf{Image-CLEF} \cite{long2017deep} is a relatively small dataset with three domains, and each domain includes 12 classes and 600 images. \textbf{Office-Home} \cite{venkateswara2017deep} is a medium-size dataset, containing four domains as Art, Clipart, Product, and Real World. Each domain includes 65 classes and the total number of images is 15,500. \textbf{VisDA-C} \cite{peng2017visda} is the most challenging dataset among the four, with 152,000 synthesized images serving as the source domain and 55,000 real images serving as the target domain, each with 12 classes. 

\paragraph{Model Architecture.} We adopt ResNet-50 \cite{he2016deep} as the backbone for Office-31 and Office-Home, and ResNet-101 for VisDA-C. To facilitate fair comparisons, we follow the protocols from DINE \cite{liang2021distill}, replacing the last layer of the target network with a pipeline as a fully-connected layer, batch normalization layer, fully-connected layer, and weight normalization layer. As for the pretrained source model, ResNet and Vision Transformer (ViT) \cite{dosovitskiy2020image} are applied.

\paragraph{Implementation.} We set the batch size to 64 and adopt SGD \cite{ruder2016overview} as the optimizer, with a momentum of 0.9 and a weight decay of 1e-3. For Office-31 and Office-Home, the learning rate is set as 1e-3 for the convolutional layers and 1e-2 for the rest. For VisDA-C, we choose 1e-4 for the convolutional layers and 1e-3 for the rest. The learning rate scheduler is the same as \cite{liang2020we}, i.e., a polynomial annealing strategy. Label smoothing \cite{muller2019does} is used on the leverage of source client, with 100 epochs for all the tasks. For the training procedure on target client, we train 30 epochs for all the tasks. In the evaluation stage, all results are obtained by averaging three random runs. For the hyper-parameters in Eq. \ref{eq:total-bbda}, we set $\beta =1.2$, $\gamma =0.6$, and $\theta =0.3$. PyTorch \cite{paszke2019pytorch} is used for the implementation. For the proposed Subnetwork Distillation (Sec. 3.3), the subnetwork width is set as 0.84$\times$.
When it comes to inference, we only use the full target network. Training is conducted on an NVIDIA RTX A5000 GPU. 

\paragraph{Baselines.} Several state-of-the-art baselines are selected for comparison. For Single-Source adaptation, we compare our method with NLL-OT \cite{YM-2020Self-labelling}, NLL-KL \cite{zhang2021unsupervised}, HD-SHOT \cite{liang2020we}, SD-SHOT \cite{liang2020we}, DINE \cite{liang2021distill}, and DINE-full \cite{liang2021distill}. For Multi-Source adaptation, we compare with SD-DECISION \cite{ahmed2021unsupervised}, DINE w/o Fine-Tune (FT) \cite{liang2021distill},  DINE \cite{liang2021distill}, and DINE-full \cite{liang2021distill}.

\subsection{Results}


We compare our method (RAIN) with existing Single-Source and Multi-Source approaches, and the results are shown in Table \ref{tab:ss-bbda} and Table \ref{tab:ms-bbda} separately. 

For Single-Source adaptation, our model achieves state-of-the-art performances on average in three datasets. Compared with the second-best results, our method yields improvements of $1.8\%/1.1\%$ in Office-31, $2.4\%/0.9\%$ in Office-Home, and $5.4\%/1.6\%$ in VisDA-C. And in most tasks, our approach outperforms all the listed baselines. For Multi-Source adaptation, our model also obtains state-of-the-art results in three datasets, yielding average improvements of $1.4\%/1.1\%$ in Office-31, $1.3\%/0.7\%$ in Image-CLEF, and $1.4\%/1.8\%$ in Office-Home. Besides, in all the tasks of Multi-Source adaptation, RAIN outperforms all the listed baselines. 

\subsection{Analysis}

\begin{table}[!ht]

\centering
\resizebox{0.95\linewidth}{12.5mm}{%
    \begin{tabular}{l Sc Sc Sc}
    \toprule
    \textcolor{red}{Single-Source} & A$\rightarrow$ D & Ar$\rightarrow$ Rw & Syn$\rightarrow$ Real \\
    \hline 
    w/o $\mathcal{L}_{pm}$ \& $\mathcal{L}_{sd}$ \& $\mathcal{L}_{wg}$ (baseline) & 90.2 & 79.3 & 76.0\\
    \hline
    w/ $\mathcal{L}_{pm}$ & 91.6 ($\uparrow$1.4)& 80.4 ($\uparrow$1.1) & 77.7 ($\uparrow$1.7)\\
    w/ $\mathcal{L}_{sd}$ & 90.8 ($\uparrow$0.6) & 79.7 ($\uparrow$0.4) & 77.0 ($\uparrow$1.0)\\
    w/ $\mathcal{L}_{pm}$ \& $\mathcal{L}_{sd}$ & 91.9 ($\uparrow$1.7) & 81.0 ($\uparrow$1.7) & 78.9 ($\uparrow$2.9)\\
    w/ $\mathcal{L}_{sd}$ \& $\mathcal{L}_{wg}$ & 93.1 ($\uparrow$2.9) & 81.6 ($\uparrow$2.3) & 81.7 ($\uparrow$5.7) \\
    RAIN ($\mathcal{L}_{pm}$ + $\mathcal{L}_{sd}$ + $\mathcal{L}_{wg}$) & 93.8 ($\uparrow$3.6) & 82.8 ($\uparrow$3.5) & 82.7 ($\uparrow$6.7)\\
    \bottomrule
   \end{tabular}} \par\vskip-1.4pt
\resizebox{0.95\linewidth}{12.5mm}{%
    \begin{tabular}{l Sc Sc Sc}
    \toprule
    \textcolor{blue}{Multi-Source} & $\rightarrow$A & $\rightarrow$P & $\rightarrow$Ar \\
    \hline 
    w/o $\mathcal{L}_{pm}$ \& $\mathcal{L}_{sd}$ \& $\mathcal{L}_{wg}$ (baseline) & 74.9 & 78.6 & 72.1\\
    \hline
    w/ $\mathcal{L}_{pm}$ & 76.5 ($\uparrow$1.6) & 80.2 ($\uparrow$1.6) & 73.3 ($\uparrow$1.2)\\
    w/ $\mathcal{L}_{sd}$ & 75.4 ($\uparrow$0.5)& 79.4 ($\uparrow$0.8)& 73.0 ($\uparrow$0.9)\\
    w/ $\mathcal{L}_{pm}$ \& $\mathcal{L}_{sd}$ & 77.7 ($\uparrow$2.8)& 80.8 ($\uparrow$2.2)& 74.5 ($\uparrow$2.4)\\
    w/ $\mathcal{L}_{sd}$ \& $\mathcal{L}_{wg}$ & 78.0 ($\uparrow$3.1)& 81.2 ($\uparrow$2.6)& 74.6 ($\uparrow$2.5)\\
    RAIN ($\mathcal{L}_{pm}$ + $\mathcal{L}_{sd}$ + $\mathcal{L}_{wg}$) & 79.8 ($\uparrow$4.9) & 82.0 ($\uparrow$3.4) & 76.0 ($\uparrow$4.1)\\
    \bottomrule
   \end{tabular}}
\caption{Ablation Study of Losses on Selected Tasks}
\label{tab:ab-loss}

\end{table}  

\begin{table}[!h]
    \centering
    
    \resizebox{0.95\linewidth}{18mm}{%
    \begin{tabular}{Sc Sc Sc Sc}
         \toprule
          \textcolor{red}{Single-Source} & A$\rightarrow$ D & Ar$\rightarrow$ Cl & Syn$\rightarrow$ Real \\
          
         \hline
         SD-SHOT & 89.2 & 50.1 & 74.5 \\
         SD-SHOT w/ MixUp & 89.5 (+0.3) & 50.1 (+0.0) & 73.6 (-0.9) \\
         SD-SHOT w/ CutMix & 89.0 (-0.2) & 49.8 (-0.3) & 74.2 (-0.3) \\
         SD-SHOT w/ RandAugment & 89.4 (+0.2) & 50.3 (+0.2) & 74.7 (+0.2)\\
         SD-SHOT w/ \textbf{Phase MixUp} & 90.8 \textcolor{amethyst}{(+1.6)} & 50.7 \textcolor{amethyst}{(+0.6)} & 75.2 \textcolor{amethyst}{(+0.7)} \\
         \hline
         
         DINE-full  w/o MixUp & 90.9 & 52.9 & 76.3 \\
         DINE-full  (w/ MixUp by default)& 91.7 (+0.8) & 54.2 (+1.3) & 77.3 (+1.0)\\
         DINE-full w/ CutMix & 91.4 (+0.5) & 53.8 (+0.8) & 77.1 (+0.8)\\
         DINE-full w/ RandAugment & 92.0 (+1.1) & 54.4 (+1.5) & 77.3 (+1.0)\\
         DINE-full w/ \textbf{Phase MixUp} & 92.8 \textcolor{amethyst}{(+1.9)} & 55.0 \textcolor{amethyst}{(+2.1)} & 77.6 \textcolor{amethyst}{(+1.3)}\\
                
         \end{tabular}} \par\vskip-1.4pt
\resizebox{0.95\linewidth}{18mm}{%
\begin{tabular}{Sc Sc Sc Sc}
\toprule
          \textcolor{blue}{Multi-Source} & $\rightarrow$A & $\rightarrow$P & $\rightarrow$Ar \\
          \hline
         SD-DECISION & 66.6 & 74.1 & 62.5\\
         SD-DECISION  w/ MixUp& 66.5 (-0.1) & 74.3 (+0.2) & 62.0 (-0.5)\\
         SD-DECISION w/ CutMix & 67.2 (+0.6) & 74.7 (+0.6) & 62.2 (-0.3)\\
         SD-DECISION w/ RandAugment & 68.6 (+2.0) & 74.8 (+0.7) & 63.0 (+0.5)\\
         SD-DECISION w/ \textbf{Phase MixUp} & 69.9 \textcolor{amethyst}{(+3.3)} & 75.6 \textcolor{amethyst}{(+1.5)} & 64.5 \textcolor{amethyst}{(+2.0)}\\ \bottomrule
          
         \hline
         DINE-full  w/o MixUp & 75.3 & 78.8 & 73.0\\
         DINE-full  (w/ MixUp by default)& 77.1 (+1.8) & 79.7 (+0.9) & 74.9 (+1.9)\\
         DINE-full w/ CutMix & 75.0 (-0.3) & 77.4 (-1.4) & 73.5 (+0.5)\\
         DINE-full w/ RandAugment & 78.1 (+2.8) & 80.6 (+1.8) & 74.5 (+1.5)\\
         DINE-full w/ \textbf{Phase MixUp} & 78.8 \textcolor{amethyst}{(+3.5)} & 81.3 \textcolor{amethyst}{(+2.5)} & 75.6 \textcolor{amethyst}{(+2.6)}\\ \bottomrule
         \end{tabular}}
\caption{Ablation Study of Augmentation Methods}
\label{tab:ab-aug}
\end{table}

\begin{figure*}[htp]
  \centering
  \includegraphics[width=0.97\linewidth]{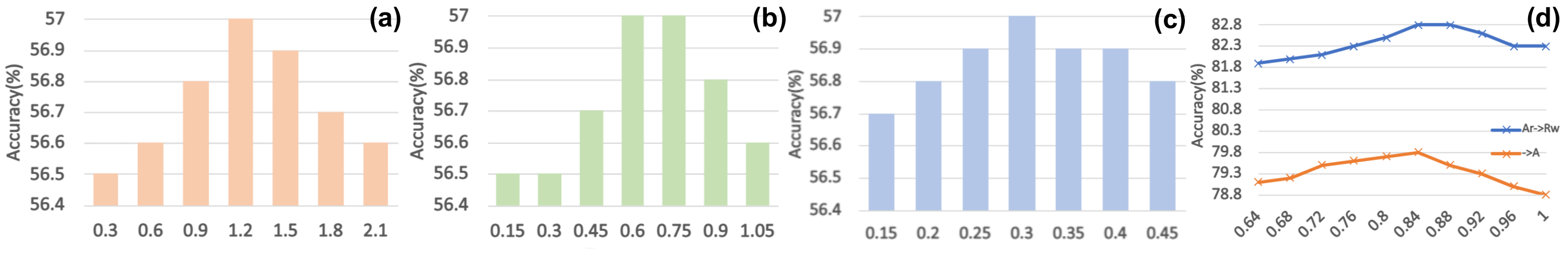}
  \caption{Parameter Analysis (best viewed in color.): (a) Analysis on $\beta$ in Eq. \ref{eq:total-bbda}; (b) Analysis on $\gamma$ in Eq. \ref{eq:total-bbda}; (c) Analysis on $\theta$ in Eq. \ref{eq:total-bbda}; (d) Analysis on subnetwork width ratios. (a), (b), and (c) are conducted on the task Ar$\rightarrow$ Cl, while (d) is based on Ar$\rightarrow$ Rw and $\rightarrow$A. }
  \label{fig:params}
\end{figure*}

\begin{figure*}
  \centering
  \includegraphics[width=0.95\linewidth]{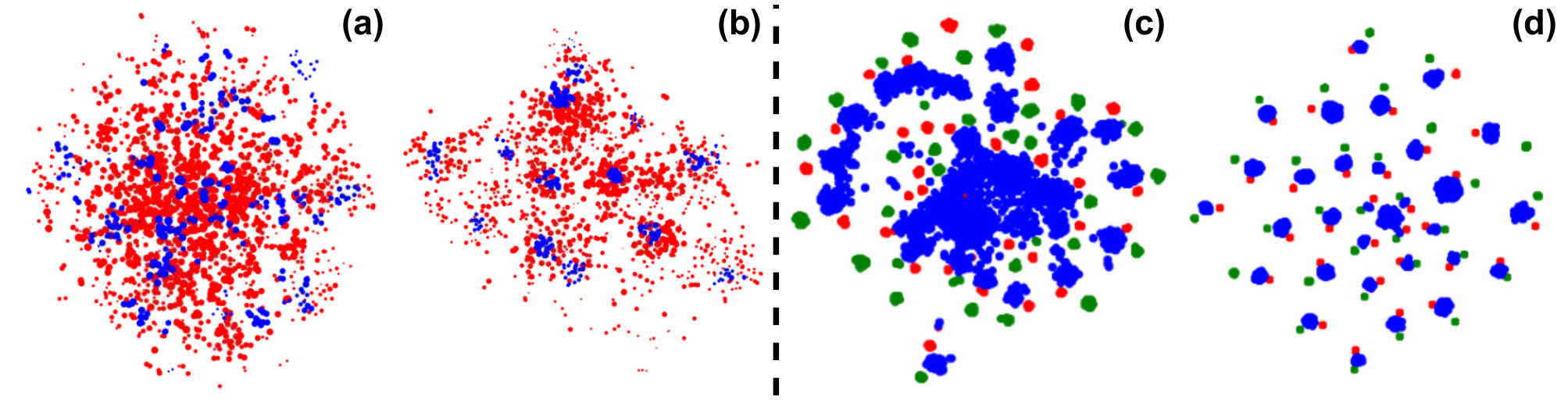}
  \caption{Feature Visualization. (best viewed in color.): (a) Single-Source before adaptation; (b) Single-Source after adaptation; (c) Multi-Source before adaptation; (d) Multi-Source after adaptation. We select task A (red dots)$\rightarrow$ D (blue dots) for Single-Source and D (red dots) $\&$ W (green dots) $\rightarrow$ A (blue dots) for Multi-Source.}
  \label{fig:tsne}
\end{figure*}
\paragraph{Ablation Study.} We study the contributions of three proposed losses in our approach and the results are shown in Table \ref{tab:ab-loss}. Three representative tasks from different datasets are selected for both Single-Source and Multi-Source adaptation. Based on the three losses and their dependency (e.g., $\mathcal{L}_{wg}$ relies on the value of $\mathcal{L}_{sd}$), six situations are listed. From Table \ref{tab:ab-loss}, we observe that all the components play a role in improving the model's performance. Besides, it demonstrates that simply using $\mathcal{L}_{sd}$ cannot ensure the efficacy of the distillation from the subnetwork to full network, as the increases are all less than $1\%$. However, when $\mathcal{L}_{sd}$ works together with $\mathcal{L}_{wg}$, the boost is significant enough as no less than $2.3\%$ in all tasks here. This indicates that maintaining the gradient discrepancy of the full target network and target subnetwork is really important, as it ensures that the target subnetwork learns different knowledge, thus guaranteeing the distillation process in $\mathcal{L}_{sd}$ to be meaningful. 
As the proposed Phase MixUp can be considered as a general data augmentation scheme, we further validate its superiority by comparing it with the existing state-of-the-art augmentation methods including MixUp \cite{zhang2018mixup}, RandAugment \cite{shorten2019survey} and CutMix \cite{yun2019cutmix}. All these techniques are combined with different Black-Box adaptation approaches. The comparison results of Single-Source and Multi-Source adaptation are listed in Table \ref{tab:ab-aug}. It is evident that Phase MixUp consistently outperforms all the other three techniques, and it plays a positive role in all situations. 

\paragraph{Parameter Study.} There are three hyperparameters in our overall objective (Eqs. \ref{eq:total-bbda}) as $\beta$, $\gamma$, and $\theta$ that weight the importance of $\mathcal{L}_{pm}$, $\mathcal{L}_{sr}$, and $\mathcal{L}_{wg}$. We select the task Ar$\rightarrow$Cl from Office-Home and conduct parameter analysis 
in Fig. \ref{fig:params}a, Fig. \ref{fig:params}b, and Fig. \ref{fig:params}c. For $\beta$, we choose relatively large values with an interval of 0.3, while for $\gamma$ and $\theta$ the values are smaller with an interval of 0.15 and 0.05 respectively. From this figure, we can observe that the best values for $\beta$ are near 1.2 and 1.5. For $\gamma$, 0.60 to 0.75 is a suitable range, and 0.25 to 0.30 is ideal for $\theta$. What's more, we can see that the model's performance remains stable and competitive in the range of values we tested. 
We also provide a detailed analysis of the subnetwork width in Subnetwork Distillation based on both Single-Source and Multi-Source settings with the tasks Ar$\rightarrow$Rw and  $\rightarrow$A separately, as shown in Fig. \ref{fig:params}d. We see that the best choice for the Single-Source task is between 0.84 and 0.88, while it is between 0.80 and 0.84 for the Multi-Source task. Our choice of using 0.84 is reasonable. 

\paragraph{Qualitative Visualization.} We use t-SNE \cite{van2008visualizing} to visualize the features produced by source-pretrained models (i.e. No-Adapt) and our models (i.e. Fully-Adapted), and the results are shown in Fig. \ref{fig:tsne}, where the left part is about Single-Source and the right part is about Multi-Source. Fig. \ref{fig:tsne}a and Fig. \ref{fig:tsne}c are results before adaptation, while Fig. \ref{fig:tsne}b and Fig. \ref{fig:tsne}d are results after adaptation. All of these features are produced by Resnet-based source models and target models. From them, we can observe that after adaptation, the data points with varied colors (i.e., from different domains) form multiple clear clusters and are no longer in chaos, which demonstrates the effectiveness of our proposed method \textbf{RAIN}.

\section{Conclusion}

In this paper, we propose a method named RegulArization on Input and Network (RAIN) for Black-Box domain adaptation. We propose a new data augmentation technique called Phase MixUp to regularize the data input, thus encouraging class consistency for better target representations. We also propose Subnetwork Distillation as a network-level regularization technique to transfer knowledge from the target subnetwork to the full target network, hence learning diverse target representations and calibrating the full network. Comprehensive experiments on several datasets testify RAIN's efficacy in both Single-Source and Multi-Source settings, together with detailed quantitative and qualitative analysis. 

\appendix
\section*{Appendix}
In this supplementary material, we provide the following contents for better understanding of the paper:
\begin{enumerate}

    \item Proof of Lemma \ref{lemma} and Theorem \ref{theorem} in the manuscript.
    
    \item Extensive Study on Phase MixUp.
    
    \item Extensive Ablation Study.
    
    \item Extensive Parameter Study.
\end{enumerate}

\section{Theoretical Proofs}

In this section, we prove Lemma \ref{lemma} and Theorem \ref{theorem}. 

\begin{lemma}
\label{lemma}
    The source loss and target loss are bounded by joint distributions and empirical predictions:
   
    \begin{equation}
       l_{s} \leq \mathbb{E}_{p_{s}(y,z)}[-\log\hat{p}(y|z)], ~~~
         l_{t} \leq \mathbb{E}_{p_{t}(y,z)}[-\log\hat{p}(y|z)].
     \end{equation}
     
\end{lemma}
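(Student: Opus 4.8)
The plan is to read $l_s$ and $l_t$ as negative-log-likelihood risks of the empirical predictor $\hat{p}$ under the respective joint laws, i.e. $l_s = -\log \mathbb{E}_{p_s(y,z)}[\hat{p}(y|z)]$ and $l_t = -\log \mathbb{E}_{p_t(y,z)}[\hat{p}(y|z)]$, and to obtain both bounds from a single application of Jensen's inequality to the convex function $-\log$. Since the two claims differ only in whether the expectation is taken under $p_s$ or under $p_t$, I would first prove a generic statement for an arbitrary joint law $p$ and then instantiate it twice; this keeps the argument symmetric and avoids any property that distinguishes source from target.

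First I would record that $\phi(u) = -\log u$ is convex on $(0,\infty)$, and that we may assume $\hat{p}(y|z) > 0$ on the support of $p$ (otherwise the right-hand side is $+\infty$ and the bound is trivial), so $\phi$ is applied to strictly positive arguments and every quantity below is well defined. Jensen's inequality for the convex $\phi$ then gives
\begin{equation}
-\log \mathbb{E}_{p(y,z)}[\hat{p}(y|z)] \;=\; \phi\big(\mathbb{E}_{p(y,z)}[\hat{p}(y|z)]\big) \;\leq\; \mathbb{E}_{p(y,z)}[\phi(\hat{p}(y|z))] \;=\; \mathbb{E}_{p(y,z)}[-\log \hat{p}(y|z)].
\end{equation}
Instantiating $p = p_s$ yields the source bound and $p = p_t$ yields the target bound, which is exactly the lemma. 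No relationship between source and target is required at this stage; Assumptions~1 and~2 enter only later, in Theorem~\ref{theorem}.

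The main obstacle I anticipate is not the inequality itself but fixing the definition of the losses so that the left-hand sides genuinely equal $-\log \mathbb{E}_{p}[\hat{p}(y|z)]$. If instead $l_s$ and $l_t$ are intended as a bounded risk such as the error probability $\mathbb{E}_{p}[1 - \hat{p}(y|z)]$, then the same conclusion follows from the elementary pointwise surrogate inequality $1 - u \leq -\log u$ for $u \in (0,1]$ integrated against $p$, and the delicate point shifts to justifying that chosen per-example surrogate rather than the expectation swap. Either way the proof reduces to one convexity or surrogate step followed by monotonicity of the expectation, so I would state the loss explicitly at the outset to make the one-line Jensen argument airtight.
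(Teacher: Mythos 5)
Your proposal and the paper's proof share the same engine (Jensen's inequality for the convex function $-\log$), but you guessed a different definition of the loss, and that changes where Jensen must be applied. In the paper, $l_s$ is the standard per-sample risk
\begin{equation*}
l_s=\mathbb{E}_{p_s(x,y)}[-\log\hat p(y|x)],
\end{equation*}
where the predictor's output on an input $x$ marginalizes the stochastic latent code produced by Phase MixUp, $\hat p(y|x)=\mathbb{E}_{p(z|x)}[\hat p(y|z)]$. The proof then applies Jensen \emph{conditionally on $(x,y)$}, i.e., only to the inner expectation over $z\mid x$:
\begin{equation*}
-\log \mathbb{E}_{p(z|x)}[\hat p(y|z)]\le \mathbb{E}_{p(z|x)}[-\log\hat p(y|z)],
\end{equation*}
and finally collapses the iterated expectation $\mathbb{E}_{p_s(x,y)}\mathbb{E}_{p(z|x)}$ into $\mathbb{E}_{p_s(y,z)}$ by the tower property. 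The latent-variable decomposition $\hat p(y|x)=\mathbb{E}_{p(z|x)}[\hat p(y|z)]$ is the real content of the proof: it is what turns a statement about the model's per-sample risk into a statement about the joint law $p_s(y,z)$.

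Your version instead defines $l_s=-\log\mathbb{E}_{p_s(y,z)}[\hat p(y|z)]$ and applies Jensen once to the full joint expectation. That argument is internally correct, but it bounds a strictly smaller quantity: by a further application of Jensen, $-\log\mathbb{E}_{p_s(y,z)}[\hat p(y|z)]\le \mathbb{E}_{p_s(x,y)}[-\log\hat p(y|x)]$, so your reading of the lemma is weaker than (and implied by) the paper's, and under the paper's definition your single outer Jensen step does not establish the claim. Moreover, $-\log\mathbb{E}[\hat p]$ is not the risk that Theorem~1 needs to control, since the theorem is meant to bound the expected per-sample target loss. The repair is precisely the structure your main argument is missing rather than any new tool: keep the outer expectation over the data outside the logarithm (per-sample loss), introduce the conditional law $p(z|x)$ of the augmented representation, apply Jensen only to that inner expectation, and then marginalize over $x$. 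Your fallback reading (error probability with the surrogate $1-u\le-\log u$) is likewise pointwise correct but again proves a bound for a different loss than the one the paper intends.
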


\begin{proof}
    \begin{align*}
        l_{s} &= \mathbb{E}_{p_{s}(x,y)}[-\log \hat{p}(y|x)]\\
        &= \mathbb{E}_{p_{s}(x,y)}[-\log \mathbb{E}_{p(z|x)}( \hat{p}(y|z))]\\
        &\leq \mathbb{E}_{p_{s}(x,y)}[\mathbb{E}_{p(z|x)}(-\log \hat{p}(y|z))] &&\text{(Jensen Inequality)}\\
        &= \mathbb{E}_{p_{s}(y,z)}[-\log\hat{p}(y|z)].
    \end{align*}
Similarly, we can get 
$l_{t} \leq \mathbb{E}_{p_{t}(y,z)}[-\log\hat{p}(y|z)]$.
     
\end{proof}

\begin{theorem}
\label{theorem}
The target loss can be bounded by source loss and the JS divergence between the outputs of the full target network and target subnetwork as:
\begin{equation}
    l_{t} \leq l_{s} + C\sqrt{2\mathrm{D_{JS}}(p_{t}(y,z)||p_{sub}(y,z))}.
\end{equation}
\end{theorem}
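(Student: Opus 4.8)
The plan is to start from the Lemma's bound on the target loss and gradually convert the target expectation into a source term plus a divergence penalty, routing through the total variation distance as an intermediate quantity. Writing $g(y,z) := -\log\hat{p}(y|z)$, which is nonnegative and, by Assumption 1, bounded above by $C$, I would first invoke the Lemma to obtain $l_{t} \leq \mathbb{E}_{p_{t}(y,z)}[g]$. The whole argument then reduces to controlling $\mathbb{E}_{p_{t}}[g]$ by $\mathbb{E}_{p_{sub}}[g]$ (hence by the source loss, via Assumption 2) plus something proportional to the JS divergence.

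First I would insert the subnetwork distribution by adding and subtracting, $\mathbb{E}_{p_{t}}[g] = \mathbb{E}_{p_{sub}}[g] + \int (p_{t}(y,z)-p_{sub}(y,z))\,g(y,z)\,dy\,dz$. For the integral remainder, since $0\le g\le C$, only the region where $p_{t}>p_{sub}$ contributes positively, so the remainder is at most $C\int (p_{t}-p_{sub})^{+} = C\,\mathrm{TV}(p_{t},p_{sub})$, where $\mathrm{TV}$ denotes the total variation distance $\tfrac12\lVert\cdot\rVert_{1}$. This is where Assumption 1 does its work: it converts the pointwise log-likelihood penalty into a clean multiple of a probability distance, independent of the shape of $\hat{p}$.

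Next I would convert the total variation distance into the JS divergence. Applying Pinsker's inequality to each of the two KL terms in $\mathrm{D_{JS}}(p_{t}\|p_{sub})=\tfrac12\mathrm{D_{KL}}(p_{t}\|m)+\tfrac12\mathrm{D_{KL}}(p_{sub}\|m)$ with $m=(p_{t}+p_{sub})/2$, together with the identity $\mathrm{TV}(p_{t},m)=\tfrac12\mathrm{TV}(p_{t},p_{sub})$, yields $\mathrm{D_{JS}}(p_{t}\|p_{sub})\ge\tfrac12\mathrm{TV}(p_{t},p_{sub})^{2}$, i.e. $\mathrm{TV}(p_{t},p_{sub})\le\sqrt{2\,\mathrm{D_{JS}}(p_{t}\|p_{sub})}$. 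Substituting into the remainder bound gives exactly $C\sqrt{2\,\mathrm{D_{JS}}(p_{t}\|p_{sub})}$, the second term of the claim.

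Finally, for the leading term $\mathbb{E}_{p_{sub}}[g]$, I would integrate Assumption 2 over $(y,z)$ to obtain $\mathbb{E}_{p_{sub}}[g]\le\mathbb{E}_{p_{s}}[g]$, and then identify $\mathbb{E}_{p_{s}}[g]$ with the source loss $l_{s}$ through the Lemma, concluding $l_{t}\le l_{s} + C\sqrt{2\,\mathrm{D_{JS}}(p_{t}\|p_{sub})}$. The main obstacle I anticipate is precisely this last identification: the Lemma only furnishes $l_{s}\le\mathbb{E}_{p_{s}}[g]$, whose inequality points the \emph{wrong} way for a clean substitution, so I would need either to treat the variational bound as tight (reading it as the definition of $l_{s}$) or to restate Assumption 2 directly as the ``superiority'' statement $\mathbb{E}_{p_{sub}}[g]\le l_{s}$. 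The $\mathrm{TV}$-to-$\mathrm{D_{JS}}$ step is routine once Pinsker is in hand; the delicate part is making the source/subnetwork comparison and the Lemma chain logically consistent rather than merely heuristic.
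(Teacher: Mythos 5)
Your proposal is correct and follows essentially the same route as the paper's proof: insert the subnetwork distribution, use Assumption 1 to bound the remainder by $C$ times the total variation restricted to the region $p_t > p_{sub}$, and convert total variation to JS divergence via Pinsker (your shortcut through the identity $\mathrm{TV}(p_t,p_m)=\tfrac12\mathrm{TV}(p_t,p_{sub})$ yields the same constant as the paper's triangle-inequality-plus-$\sqrt{a}+\sqrt{b}\le\sqrt{2}\sqrt{a+b}$ chain, so the two organizations are interchangeable). The one obstacle you flag at the end --- that the Lemma's inequality $l_s\le\mathbb{E}_{p_s(y,z)}[-\log\hat p(y|z)]$ points the wrong way for substituting $l_s$ in an upper-bound chain --- is not a gap in your argument relative to the paper: the paper's own proof performs exactly that substitution (replacing $-\int p_s(y,z)\log\hat p(y|z)$ by $l_s$ while citing the Lemma), silently treating the variational bound as an equality, which is precisely the repair you propose.
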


\begin{proof}
We assume $p_m(y,z)=\frac{1}{2}[p_{t}(y,z)+p_{sub}(y,z)]$.
\begin{align*}
 &l_{t} \leq \mathbb{E}_{p_{t}(y,z)}[-\log\hat{p}(y|z)] \\&\text{(Lemma \ref{lemma})}\\&=  -\int [p_{t}(y,z)-p_{s}(y,z)]\log\hat{p}(y|z)  \\ &- \int p_{s}(y,z)\log\hat{p}(y|z)\\
&\leq -\int [p_{t}(y,z)-p_{sub}(y,z)]\log\hat{p}(y|z)  \\ &- \int p_{s}(y,z)\log\hat{p}(y|z) \\&\text{(Assumption 2)}\\
&= l_{s} -\int [p_{t}(y,z)-p_{sub}(y,z)]\log\hat{p}(y|z) \\&\text{(Lemma \ref{lemma})}\\
&\leq l_{s} -\int_{p_{t}>p_{sub}} [p_{t}(y,z)-p_{sub}(y,z)]\log\hat{p}(y|z)\\
&\leq l_{s} + C\int_{p_{t}>p_{sub}} [p_{t}(y,z)-p_{sub}(y,z)] \\&\text{(Assumption 1)}\\
&\leq l_{s} + \frac{C}{2}\int |p_{t}(y,z)-p_{sub}(y,z)| (*)\\
&= l_{s} + \frac{C}{2}\int |p_{t}(y,z)-p_m(y,z)+p_m(y,z)-p_{sub}(y,z)|\\
&\leq l_{s} + \frac{C}{2}\left[\int |p_{t}(y,z)-p_m(y,z)|+\int|p_{sub}(y,z)-p_m(y,z)|\right]\\
&\leq l_{s} + \frac{C}{2}\left[\sqrt{2\int p_{t}\log \frac{p_{t}}{p_m}}+ \sqrt{2\int p_{sub}\log \frac{p_{sub}}{p_m}}\right]\\&\text{(Pinsker Inequality)}\\
&\leq l_{s} + \frac{C}{2} \cdot \sqrt{2} \cdot \sqrt{2}\sqrt{\int p_{t}\log \frac{p_{t}}{p_m} + \int p_{sub}\log \frac{p_{sub}}{p_m}}\\
&= l_{s} + C\sqrt{\mathrm{D_{KL}}(p_{t}(y,z)||p_m(y,z))+\mathrm{D_{KL}}(p_{sub}(y,z)||p_m(y,z))}\\
&= l_{s} + C\sqrt{2\mathrm{D_{JS}}(p_{t}(y,z)||p_{sub}(y,z))}.
\end{align*}
How do we deduce the step with (*)?

\begin{align*}
    &\int [p_{t}(y,z) - p_{sub}(y,z)]dydz = 0 \\
    \implies & \int_{p_{t}>p_{sub}} [p_{t}(y,z) - p_{sub}(y,z)] \\ &+ \int_{p_{t}<p_{sub}} [p_{t}(y,z) - p_{sub}(y,z)] = 0 \\
    \implies & \int_{p_{t}>p_{sub}} |p_{t}(y,z) - p_{sub}(y,z)| \\ =& \int_{p_{t}<p_{sub}} |p_{t}(y,z) - p_{sub}(y,z)| \\ =&\frac{1}{2} 
     \int |p_{t}(y,z) - p_{sub}(y,z)| 
\end{align*}
In such cases, we proved that minimizing JS divergence benefits the domain adaptation theoretically.
\end{proof}

\begin{figure*}[h]
     \centering
     \begin{subfigure}[b]{0.49\textwidth}
         \centering
         \includegraphics[width=\textwidth]{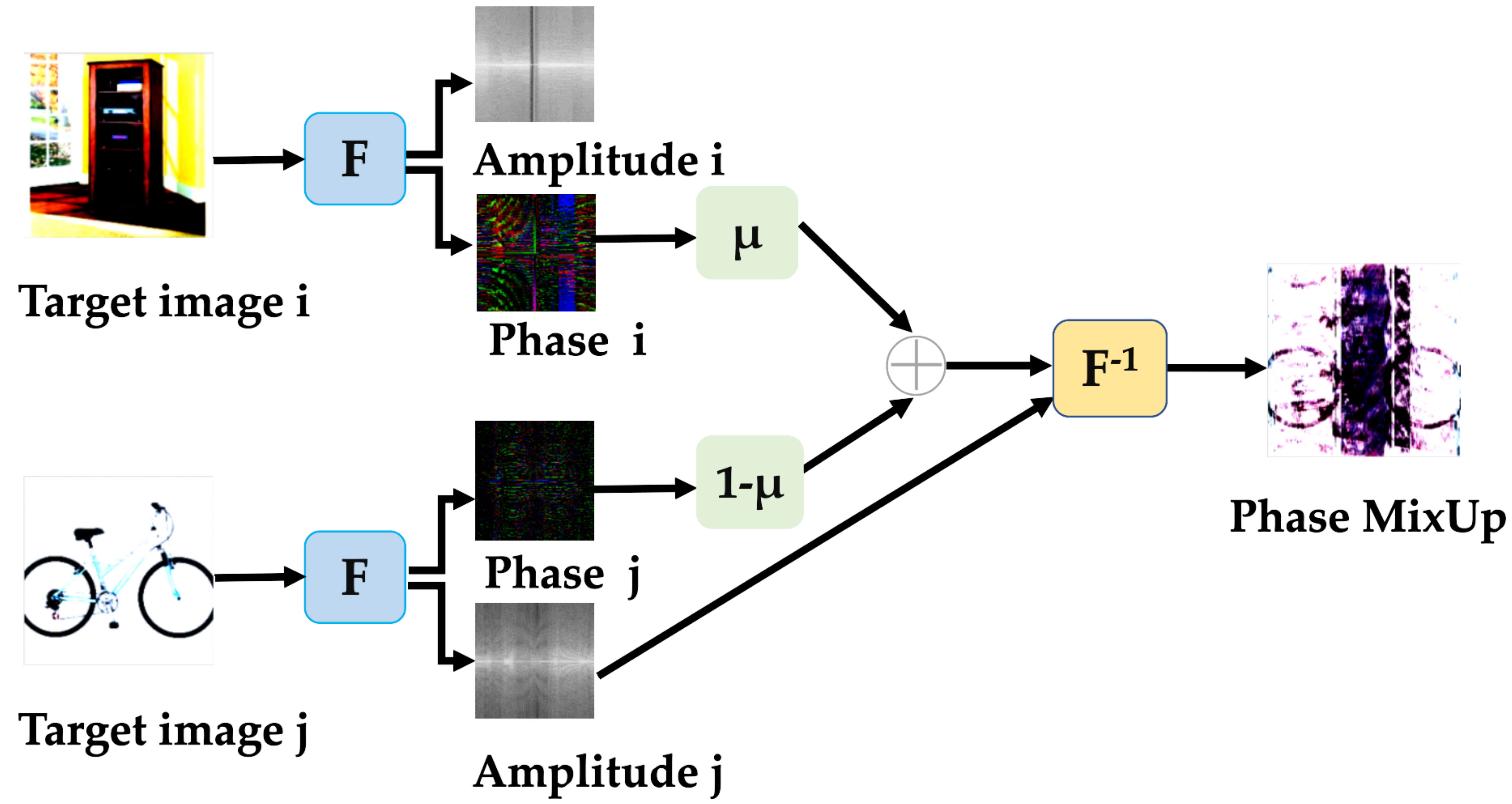}
         \caption{Phase MixUp \textbf{(our method)}}
         \label{fig:pha}
     \end{subfigure}
     \hfill
     \begin{subfigure}[b]{0.49\textwidth}
         \centering
         \includegraphics[width=\textwidth]{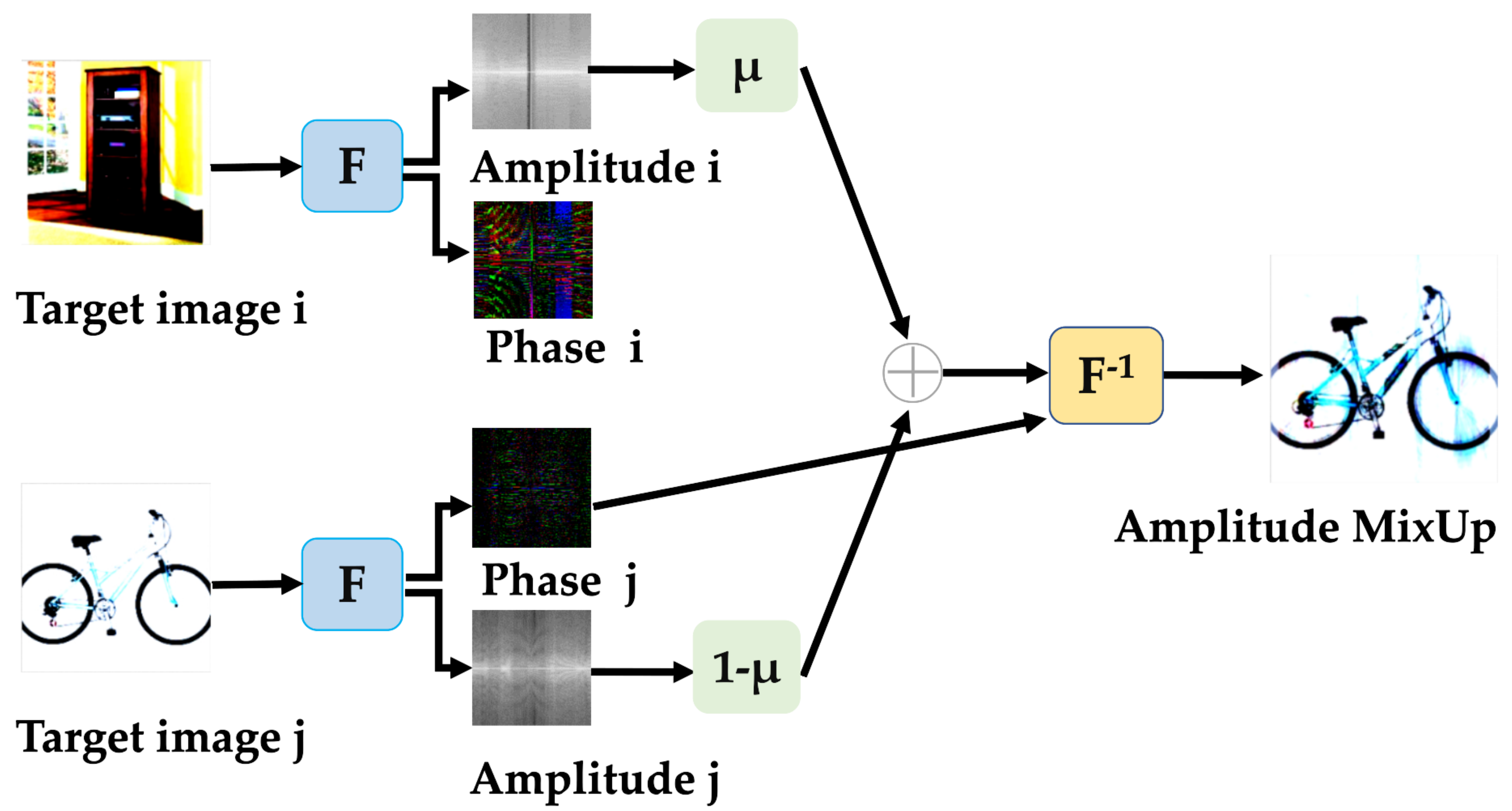}
         \caption{Amplitude MixUp}
         \label{fig:amp}
     \end{subfigure}
     \hfill
     \begin{subfigure}[b]{0.99\textwidth}
         \centering
         \includegraphics[width=\textwidth]{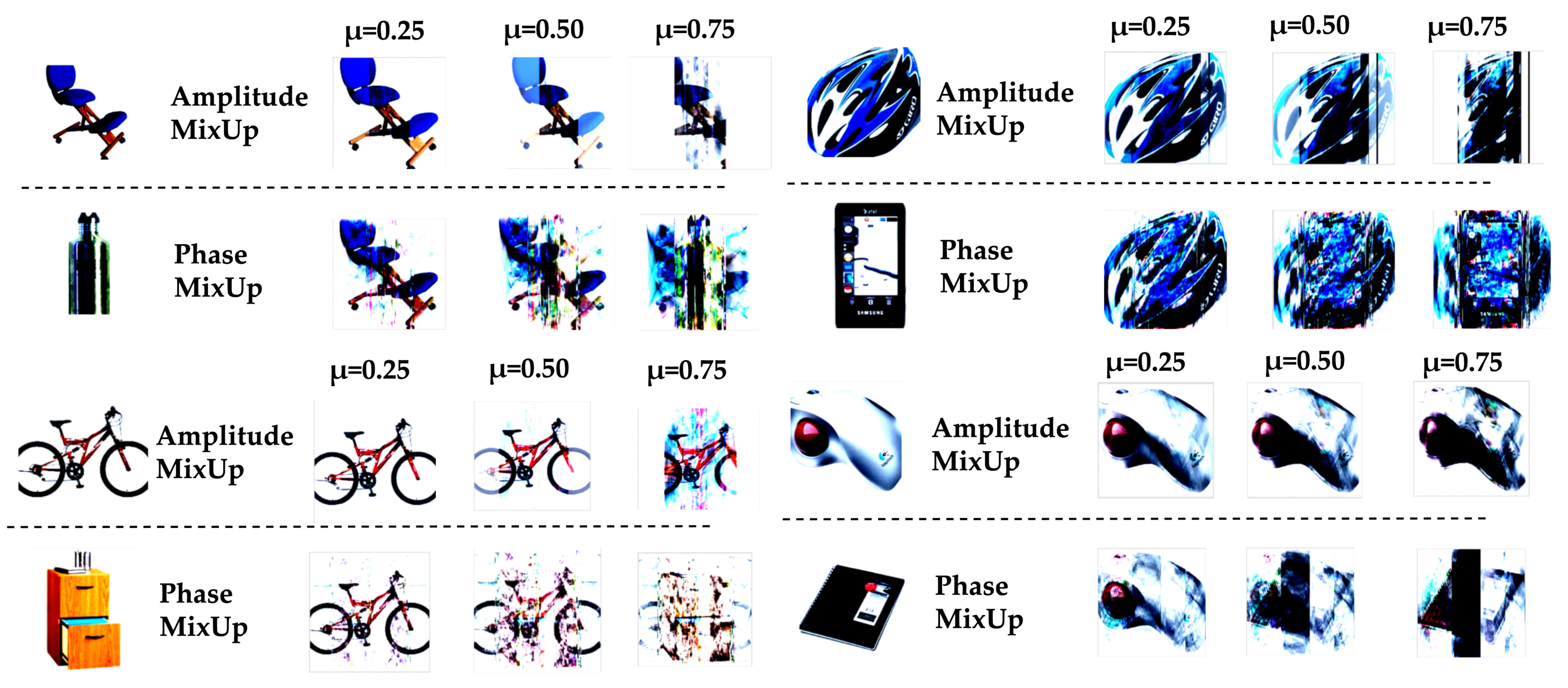}
         \caption{}
         \label{fig:compare}
     \end{subfigure}
     
\caption{Visualization of Amplitude MixUp and Phase MixUp (Best viewed in color.) $\mu$ is the interpolation ratio. } 
\label{fig:visual_mixup}
\end{figure*}
\section{Extensive Study on Phase MixUp}

Frequency domain analysis (e.g. Fourier transform) is an important image processing tool of images. The frequency spectrum can be decomposed as amplitude spectrum and phase spectrum. Based on the findings of FDA \cite{yang2020fda} and FedDG \cite{liu2021feddg}, the amplitude spectrum includes more information about image style and background, while the phase spectrum emphasizes more object information. 
This inspires us to focus on the \emph{phase spectrum} of images in the proposed Phase MixUp strategy since the object information is the key for the recognition task. To better demonstrate this point, we provide several visual examples to compare Phase MixUp (\textbf{our method}) with Amplitude MixUp in Fig. \ref{fig:visual_mixup}. Here Fig. \ref{fig:pha} shows the process of Phase MixUp, while Fig. \ref{fig:amp} represents the amplitude MixUp process. In Fig. \ref{fig:compare}, we compare the visualizations after the two kinds of mixup processes, and these visualizations can verify the assumption regarding the functions of amplitude and phase. It is evident that Phase MixUp is able to better preserve the object information (e.g. shape and structure) from both input images while reducing the background interference at the same time.

\section{Extensive Ablation Study}

\vspace{-0.3cm}
\begin{table}[!h]
\caption{Ablation Study of Losses on Selected Tasks}

\vspace{-0.3cm}
\centering
\setlength\tabcolsep{2.0pt} 
\resizebox{0.95\linewidth}{12mm}{%
    \begin{tabular}{lccc}
    \toprule
    \textcolor{red}{Single-Source} & A$\rightarrow$ D & Ar$\rightarrow$ Rw & Syn$\rightarrow$ Real \\
    \hline 
    w/o $\mathcal{L}_{pm}$ \& $\mathcal{L}_{sd}$ \& $\mathcal{L}_{wg}$ (baseline) & 91.7 & 87.5 & 81.5\\
    \hline
    w/ $\mathcal{L}_{pm}$ & 93.0 ($\uparrow$1.3) & 88.8 ($\uparrow$1.3) & 82.9 ($\uparrow$1.4)\\
    w/ $\mathcal{L}_{sd}$ & 92.4 ($\uparrow$0.7) & 88.1 ($\uparrow$0.6) & 82.3 ($\uparrow$0.8)\\
    w/ $\mathcal{L}_{pm}$ \& $\mathcal{L}_{sd}$ & 93.8 ($\uparrow$2.1) & 89.2 ($\uparrow$1.7) & 83.0 ($\uparrow$1.5)\\
    w/ $\mathcal{L}_{sd}$ \& $\mathcal{L}_{wg}$ & 95.0 ($\uparrow$2.3) & 89.6 ($\uparrow$2.1) & 84.8 ($\uparrow$3.3)\\
    RAIN ($\mathcal{L}_{pm}$ + $\mathcal{L}_{sd}$ + $\mathcal{L}_{wg}$) & 96.2 ($\uparrow$4.5) & 90.1 ($\uparrow$2.6) & 86.6 ($\uparrow$5.1)\\
    \bottomrule
   \end{tabular}} \par\vskip-1.4pt
   \setlength\tabcolsep{2.0pt} 
\resizebox{0.95\linewidth}{12mm}{%
    \begin{tabular}{lccc}
    \toprule
    \textcolor{blue}{Multi-Source} & $\rightarrow$A & $\rightarrow$P & $\rightarrow$Ar \\
    \hline 
    w/o $\mathcal{L}_{pm}$ \& $\mathcal{L}_{sd}$ \& $\mathcal{L}_{wg}$ (baseline) & 80.8 & 79.1 & 81.7\\
    \hline
    w/ $\mathcal{L}_{pm}$ & 82.0 ($\uparrow$1.1) & 80.9 ($\uparrow$1.8) & 82.5 ($\uparrow$0.8)\\
    w/ $\mathcal{L}_{sd}$ & 81.5 ($\uparrow$0.7) & 80.5 ($\uparrow$1.4) & 82.2 ($\uparrow$0.5)\\
    w/ $\mathcal{L}_{pm}$ \& $\mathcal{L}_{sd}$ & 82.6 ($\uparrow$1.8) & 81.4 ($\uparrow$2.3) & 83.0 ($\uparrow$1.3)\\
    w/ $\mathcal{L}_{sd}$ \& $\mathcal{L}_{wg}$ & 83.3 ($\uparrow$2.5) & 82.5 ($\uparrow$3.4) & 83.1 ($\uparrow$1.4)\\
    RAIN ($\mathcal{L}_{pm}$ + $\mathcal{L}_{sd}$ + $\mathcal{L}_{wg}$) & 84.5 ($\uparrow$3.7) & 83.2 ($\uparrow$4.1) & 84.0 ($\uparrow$2.3)\\
    \bottomrule
   \end{tabular}}
\vspace{-0.2cm}
\label{tab:ab-loss}
\end{table}  

\begin{table}[!h]
    \centering
    \vspace{-0.2cm}
    \caption{Ablation Study of Augmentation Methods}
    \vspace{-0.3cm}
    
    \resizebox{0.95\linewidth}{10mm}{%
    \begin{tabular}{cccc}
         \toprule
          \textcolor{red}{Single-Source} & A$\rightarrow$ D & Ar$\rightarrow$ Cl & Syn$\rightarrow$ Real \\
         \hline
         
         DINE-full  w/o MixUp & 94.9 & 63.6 & 82.0 \\
         DINE-full  (w/ MixUp by default)& 95.5 (+0.6) & 64.4 (+0.8) & 85.0 (+3.0)\\
         DINE-full w/ CutMix & 94.5 (-0.4) & 63.3 (-0.3) & 82.1 (-0.1)\\
         DINE-full w/ RandAugment & 95.8 (+0.9) & 65.0 (+1.4) & 86.3 (+4.3)\\
         DINE-full w/ \textbf{Phase MixUp} & 96.4 (+1.5) & 66.1 (+2.5) & 87.5 (+5.5)\\
                
         \end{tabular}} \par\vskip-1.4pt
\resizebox{0.95\linewidth}{10mm}{%
\begin{tabular}{cccc}
\toprule
          \textcolor{blue}{Multi-Source} & $\rightarrow$A & $\rightarrow$P & $\rightarrow$Ar \\
          
         \hline
         DINE-full  w/o MixUp & 79.2 & 81.9 & 83.3\\
         DINE-full  (w/ MixUp by default)& 81.4 (+2.2) & 81.4 (-0.5) &  83.4 (+0.1)\\
         DINE-full w/ CutMix & 78.0 (-1.2) & 80.2 (-1.7) & 82.7 (-0.6)\\
         DINE-full w/ RandAugment & 81.8 (+2.6) & 82.4 (+0.5) & 83.8 (+0.5)\\
         DINE-full w/ \textbf{Phase MixUp} & 83.0 (+3.8) & 84.5 (+2.6) & 84.2 (+0.9)\\ \bottomrule
         \end{tabular}}
\label{tab:ab-aug}
\vspace{-0.2cm}
\end{table}

In this section, we offer an extensive ablation study. Table \ref{tab:ab-loss} is the ablation study of losses based on ViT source models. Here the source and target models are both Resnet-based ones. From it, we observe that all the components benefit the model's performance. Table \ref{tab:ab-aug} is the ablation study of augmentation methods based on ViT source models. The results demonstrate the superiority of Phase MixUp. 

\section{Extensive Parameter Study}

\begin{table}[!h]
    \centering
    \caption{Ablation Study of Augmentation Methods}
    \vspace{-0.3cm}
    
    \resizebox{0.95\linewidth}{!}{%
    \begin{tabular}{cccccccc}
         \toprule
          \textcolor{red}{$\beta$} & 0.30 & 0.60 & 0.90 & 1.20 & 1.50 & 1.80 & 2.10 \\
         \hline
         
         Accuracy & 79.5 & 79.7 & 79.8 & 79.8 & 79.7 & 79.6 & 79.5\\
                
         \end{tabular}} \par\vskip-1.4pt
\resizebox{0.95\linewidth}{!}{%
\begin{tabular}{cccccccc}
\toprule
          \textcolor{blue}{$\gamma$} & 0.15 & 0.30 & 0.45 & 0.60 & 0.75 & 0.90 & 1.05 \\
         \hline
         
         Accuracy & 79.4 & 79.6 & 79.7 & 79.8 & 79.6 & 79.6 & 76.5\\
         
         \end{tabular}}
\resizebox{0.95\linewidth}{!}{%
\begin{tabular}{cccccccc}
\toprule
          \textcolor{green}{$\theta$} & 0.15 & 0.20 & 0.25 & 0.30 & 0.35 & 0.40 & 0.45 \\
         \hline
         
         Accuracy &  79.3 & 79.5 & 79.6 & 79.8 & 79.8 & 79.7 & 79.6\\ \bottomrule
         
         \end{tabular}}
\label{tab:param-123}
\vspace{-0.2cm}
\end{table}

\begin{table}[!h]
    \centering
    \vspace{-0.2cm}
    \caption{Ablation Study of Augmentation Methods}
    \vspace{-0.3cm}
    
    \resizebox{0.95\linewidth}{!}{%
    \begin{tabular}{ccccccccccc}
         \toprule
          \textcolor{red}{Ar$\rightarrow$ Rw} & 0.64 & 0.68 & 0.72 & 0.76 & 0.80 & 0.84 & 0.88 & 0.92 & 0.96 & 1.00 \\
         \hline
         
         Accuracy &  89.4 & 89.6 & 89.7 & 89.9 & 90.0 & 90.1 & 90.1 & 90.0 & 89.9 & 89.7\\
                
         \end{tabular}} \par\vskip-1.4pt

\resizebox{0.95\linewidth}{!}{%
\begin{tabular}{ccccccccccc}
\toprule
          \textcolor{blue}{$\rightarrow$A} & 0.64 & 0.68 & 0.72 & 0.76 & 0.80 & 0.84 & 0.88 & 0.92 & 0.96 & 1.00 \\
         \hline
         
         Accuracy &  83.8 & 83.9 & 84.1 & 84.3 & 84.5 & 84.5 & 84.4 & 84.3 & 84.2 & 84.1\\ \bottomrule
         
         \end{tabular}}
\label{tab:param-4}
\vspace{-0.2cm}
\end{table}

In this section, we offer an extensive parameter study. In Table \ref{tab:param-123}, the task $\rightarrow$A in the Office-31 dataset is used to illustrate. From this, we observe the relatively good option for $\beta$, $\gamma$, and $\theta$ are 1.20, 0.60, and 0.30. In Table \ref{tab:param-4}, we offer an analysis of the subnetwork width ratio based on ViT source models, which shows that 0.84 is the best choice. 

\bibliographystyle{named}
\bibliography{ijcai23}

\end{document}